\def\BibTeX{{\rm B\kern-.05em{\sc i\kern-.025em b}\kern-.08em
    T\kern-.1667em\lower.7ex\hbox{E}\kern-.125emX}}
\theoremstyle{plain}
\theoremstyle{remark}
\newtheorem{remark}{Remark}
\theoremstyle{definition}
\newtheorem{definition}{Definition}
\theoremstyle{plain}
\newtheorem{theorem}{Theorem}
\newtheorem{observation}{Observation}
\newtheorem{corollary}{Corollary}
\newtheorem{problem}{Problem}
\newtheorem{lemma}{Lemma}
\newcommand{\Mod}[1]{\ (\mathrm{mod}\ #1)}
\title{\bf Risk-aware Resource Allocation for \\Multiple UAVs-UGVs Recharging Rendezvous
\thanks{This work is supported in part by National Science Foundation Grant No. 1943368 and Army Grant No. W911NF2120076.}
\thanks{$^{*}$ indicates equal contribution and authors are listed alphabetically}}
\author{Ahmad Bilal Asghar,$^{1*}$ Guangyao Shi,$^{1*}$ Nare Karapetyan,$^1$ James Humann,$^2$ \\ Jean-Paul Reddinger,$^2$ James Dotterweich,$^2$ Pratap Tokekar$^1$ 
\thanks{\textsuperscript{1}University of Maryland, College Park, MD 20742 USA {\tt\small gyshi}, {\tt\small knare}, {\tt\small abasghar}, {\tt\small tokekar@umd.edu}}
\thanks{\textsuperscript{2}DEVCOM Army Research Laboratory, MD USA {\tt\small jean-paul.f.reddinger.civ}, {\tt\small james.d.humann.civ}, {\tt\small james.m.dotterweich.civ@army.mil}}
}
\begin{document}
\maketitle
\thispagestyle{empty}
\pagestyle{empty}
\begin{abstract}
We study a resource allocation problem for the cooperative aerial-ground vehicle routing application, in which multiple Unmanned Aerial Vehicles (UAVs) with limited battery capacity and multiple Unmanned Ground Vehicles (UGVs) that can also act as a mobile recharging stations need to jointly accomplish a
mission such as persistently monitoring a set of points. Due to the limited
battery capacity of the UAVs, they sometimes have to
deviate from their task to rendezvous with the UGVs and get recharged. Each UGV can serve a limited number of UAVs at a time. In contrast to prior work on deterministic multi-robot scheduling, we consider the challenge imposed by the stochasticity of the energy consumption of the UAV. We are interested in finding the optimal recharging schedule of the UAVs such that the travel cost is minimized and the probability that no UAV runs out of charge within the planning horizon is greater than a  user-defined tolerance.  We formulate this problem ({Risk-aware Recharging Rendezvous Problem (RRRP))} as an Integer Linear Program (ILP), in which the matching constraint captures the resource availability constraints and the knapsack constraint captures the success probability constraints. We propose a bicriteria approximation algorithm  to solve RRRP.  We demonstrate the effectiveness of our
formulation and algorithm in the context of one persistent monitoring mission.   
\end{abstract}

\section{Introduction}
Unmanned Aerial Vehicles (UAVs) are increasingly being used in applications such as surveillance \cite{kingston2008decentralized, grocholsky2006cooperative}, package delivery \cite{lin2011vehicle, murray2015flying},  environmental
monitoring \cite{diaz2012volcano, sung2021environmental}, and precision agriculture \cite{dhami2020crop} due to their ability to monitor large areas in a short period of time. One bottleneck that hinders  long-term deployments of UAVs is their limited battery capacity. Recently, there have been efforts in overcoming this bottleneck by using Unmanned Ground Vehicles (UGVs) as mobile recharging stations~\cite{tokekar2016sensor, shi2022risk, yu2018algorithms, mathew2015multirobot, maini2019cooperative, maini2015cooperation, sundar2013algorithms, liu2014energy, jeong2019truck, karak2019hybrid, li2021ground}. However, these works make the simplifying but restrictive assumption that the energy consumption of the UAV is deterministic. In practice, the energy consumption is stochastic and the planning algorithms must be able to deal with the risk of running out of charge. In our recent work~\cite{shi2022risk}, we presented a risk-aware planning algorithm for planning for a single UAV and UGV. However, that algorithm scales exponentially with the planning horizon and the number of robots. In this paper, we present an efficient risk-aware coordination algorithm for scalable, long-term UAV-UGV missions. 

We consider a scenario where the UAVs and the UGVs are executing a persistent monitoring mission by visiting a sequence of \emph{task} nodes in the order given (Figure~\ref{fig:illustrative_example}). The UAVs can take a detour from the planned mission to rendezvous with some UGV and recharge. The UGV can recharge the UAV while continuing towards its own next task node. We present several algorithms that decide \emph{when} and \emph{where} the UAVs should  recharge and \emph{which} UGV they should recharge on. 

The rate of battery discharge of a UAV is stochastic in the real world. The planning algorithms should be able to deal with such stochasticity. One approach would be to take recharging detours more frequently, thereby reducing the risk of running out of charge\footnote{In practice, when a UAV is about to run out of charge, it can land and then be retrieved by a human. By reducing the risk of running out of charge, we aim to reduce the overhead of such costly human interventions.}. However, if a UAV takes a detour, then the task nodes visited after the detour will experience a delay, thereby worsening the task performance. Since we are planning over a longer horizon with multiple UAVs and UGVs, there is a complex trade-off
between task performance and risk tolerance.  

\begin{figure}
    \centering
    \includegraphics[width=1\columnwidth]{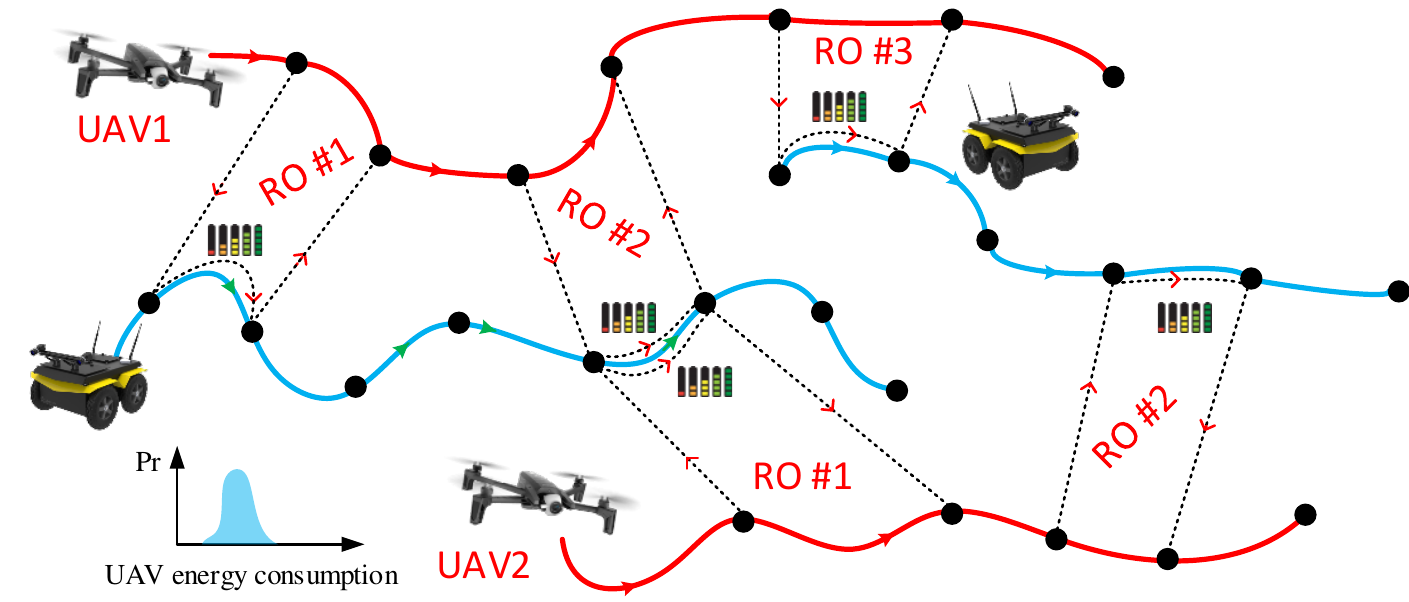}
    \caption{An illustrative example of the recharging rendezvous problem considered in this paper. A team of two UAVs and two UGVs is executing their tasks in the given order. The UAVs need to decide when and where to land on the UGVs in order to recharge minimizing the total detour time.  The energy consumption of the UAVs is stochastic. RO stands for rendezvous option.}
    \label{fig:illustrative_example}
\end{figure}

Given a team of UGVs that move
on the road network, and UAVs that can directly fly between
any pair of nodes, we are interested in finding a scheduling strategy for the UAVs by treating UGVs as resources such that the sum of detour cost of the UAVs is minimized and the probability that no UAV runs out of charge within the planning horizon is greater than a user-defined tolerance. We formulate this problem as a resource allocation problem with matching and knapsack constraints. The matching constraint captures the resource
availability constraints (each UGV can serve a limited number of
UAVs at a time). The knapsack constraint captures the risk-tolerance requirement (probability that no UAV runs out of charge).

The Risk-aware Recharging Rendezvous
Problem (RRRP) can be formulated as an integer linear program (ILP). While there are sophisticated ILP solvers, we show how to exploit the structural properties of the problem to yield more efficient algorithms with theoretical performance guarantees. Other combinatorial problems with knapsack or budget constraints have been studied in literature. Approximation algorithms for the budgeted version of the minimum spanning tree problem and maximum weight matching problem are presented in~\cite{ravi1996constrained} and~\cite{berger2011budgeted} respectively. They both use Lagrangian relaxation which resembles our approach, however, due to different problem structures, we propose a bicriteria approximation algorithm to solve RRRP.
We demonstrate the effectiveness of our
formulation and algorithm in the context of a persistent monitoring mission.  

Our work is built on results from the previous work on the cooperative routing problems, which are usually formulated as variants of the Vehicle Routing Problem (VRP) or the Traveling Salesman Problem (TSP) \cite{murray2015flying, tokekar2016sensor, lin2011vehicle, maini2019cooperative, yu2018algorithms, liu2019cooperative, manyam2016path, manyam2017cooperative}. In these works, authors usually assume a deterministic model for the energy consumption of the UAV or use the expected value for simplification. However, when the UAV executes the route, the actual energy consumption may be quite different from that used in route planning. Moreover, in the long-duration task, the estimation module may periodically update the energy consumption model. Given the routes of UAVs and UGVs generated by some existing algorithms, we consider the recharging scheduling problem within a planning horizon. Our formulation can be used in a receding horizon fashion to deal with the uncertainties in the long-duration tasks. More details will be discussed in the Sec \ref{sec:exp}.

The main contributions of this paper are:
\begin{itemize}
    \item We introduce the Risk-aware Recharging
    Rendezvous Problem (RRRP) as the first risk-aware version of the multi-robot recharging problem. We formulate the RRRP in the cooperative aerial-ground system as a matching problem on a bipartite graph with an additional knapsack constraint.
    \item We present several algorithms to solve the problem, that build on each other. Our main theoretical result is a $(1+\epsilon, 2)$ bicriteria approximation algorithm that yields a solution that violates one of the constraints by a factor of $2$, and has a detour cost within $1+\epsilon$ of the optimal cost.
    \item We demonstrate the effectiveness of our formulation and the proposed algorithm in a persistent monitoring application.
\end{itemize}



\section{Finite Horizon Risk-Aware Recharging Rendezvous Problem}

In this section we first define the Risk-aware Recharging
Rendezvous Problem on a finite horizon, and then formulate the RRRP as a matching problem on a bipartite graph with an additional knapsack constraint which enforces that the probability of no UAV running out of charge within the horizon is greater than a risk-tolerance parameter.

\subsection{Problem Statement}
Consider a team of $N_a$ UAVs and $N_g$ UGVs persistently monitoring a set of locations in an environment. The UAVs and UGVs move on the graphs $G_a = (U_a,E_a)$ and $G_g=(U_g,E_g)$ respectively. The vertex sets $U_a$ and $U_g$ represent the locations to be monitored by the aerial and ground vehicles respectively. The edge set $E_a$ may be complete since the UAVs can move between any of the tasks, whereas the edge set $E_g$ represents the road network on which the ground vehicles can move. We assume that the ordering of the task nodes for the UAVs and the UGVs is given, i.e., we have pre-defined persistent monitoring tours for the UAVs and UGVs. These tours can be generated by planners that either do not consider recharging~\cite{nigam2008persistent,asghar2019multi,hari2019efficient} or assume deterministic discharge~\cite{sundar2013algorithms,maini2019cooperative}. The tour for $i^{th}$ UAV is denoted by $\mathcal{T}^a_i$ and the tour for UGV $k$ is denoted by $\mathcal{T}^g_k$. Because of the persistent nature of the problem, the UAVs need to be recharged repeatedly in order for them to keep operating. 

A UAV $i$ can leave its monitoring tour $\mathcal{T}^a_i$ at any point along the tour to land on one of the UGVs, say UGV $k$, for recharging, while the UGV keeps moving along its tour $\mathcal{T}^g_k$. The UAV can also wait at a rendezvous location for the UGV if it reaches there before the UGV. The number of UAVs that can simultaneously charge on a given UGV is $d$. Once recharged, the UAV leaves the UGV and goes to the next task node along its monitoring tour. This procedure of a UAV leaving its monitoring tour to rendezvous with a UGV for recharging and then returning to continue its monitoring tour is referred to as a \emph{detour}. 

We assume that the UGVs do not run out of charge.\footnote{{This is a standard assumption in the literature~\cite{mathew2015multirobot, tokekar2016sensor, yu2018algorithms, maini2019cooperative, manyam2016path, manyam2019cooperative} since the runtime of the UGV can be an order of magnitude larger than the UAV. Furthermore, the UGV battery (or fuel, if its fuel powered) can be easily and quickly replenished unlike the UAV.}} Unlike much of the existing work~\cite{maini2019cooperative, mathew2015multirobot, yu2018algorithms}, we do not assume that the discharge rate of by the UAVs is deterministic. Instead, we consider a stochastic discharge model and assume that the probability of a UAV running out of charge within the next $t$ time units given the current charge level can be calculated as that in \cite{shi2022risk}. The stochastic battery discharge is monotonic in the time traveled by the UAV.

Since it is a persistent monitoring scenario where the UAVs need to be recharged indefinitely, we take the approach of solving the problem in a receding-horizon manner. The problem within a given time window $T$ is to find a recharging policy for the UAVs for up to a single recharge for each UAV. Moreover, as the battery discharge rate of UAVs is stochastic, there may be a non-zero probability of some UAVs not being able to reach any of the recharging UGVs. Hence, we also need to have a notion of risk-aversion for the UAVs. On the other hand, to avoid frequent recharging, we also consider the detour time spent by the UAVs for recharging. 

{At a high level, we consider the following problem.}

\vspace{2mm}
\noindent \textit{Given a time horizon $T$, a risk-tolerance probability ${\rho}\in(0,1)$, task routes for the $N_a$ UAVs and $N_g$ UGVs along with their current locations and state of charge, we seek to solve the problem of finding a recharging schedule for each UAV such that:
\begin{enumerate}
     \item each UAV recharges at most once during $T$,
     \item no UGV can charge more than $d$ UAVs at a time, 
    \item the probability that no UAV runs out of charge during $T$ is at least ${\rho}$, and
    \item the total detour time of the UAVs is minimized.
\end{enumerate}
}
\vspace{2mm}

In the following paragraphs, we provide more details regarding the Finite Horizon Recharging Rendezvous Problem and then propose an algorithm to solve the problem in the next section.

\subsection{Integer Linear Program Formulation}
We now formally define the setup of the problem. Since, a UAV can decide at any point along its tour to leave the path in order to rendezvous with a UGV at any point along the UGVs tour, we have a continuous constrained optimization problem. We discretize the tours of the UAVs and UGVs in order to get a combinatorial optimization problem. Given the tour $\mathcal{T}^g_k$ for UGV $k$, we discretize the tour by introducing vertices every $f$ distance starting from the UGV's current position where $f$ is the maximum distance the UGV needs to travel while a UAV completes its charging. These vertices representing possible rendezvous locations are denoted by $V(\mathcal{T}^g_k)$ and are shown as green discs in Figure~\ref{fig:rendezvous_recharging_process}.
Similarly the set $V(\mathcal{T}^a_i)$ represents the set of locations from which UAV $i$ can leave its monitoring tour $\mathcal{T}^a_i$ for a recharging rendezvous with a UGV. The set $V(\mathcal{T}^a_i)$ contains the current position of UAV $i$ and its task nodes that can be visited within next $T$ time by UAV $i$. We do not need to further discretize UAVs tours as it can be shown (Lemma~\ref{lem:UAV_vertices}) that there exists an optimal solution where the UAVs will leave their tours for recharging from either their current position or a task node. In Figure~\ref{fig:rendezvous_recharging_process}, the vertices in $V(\mathcal{T}^a_i)$ are shown as red discs. In the scenario shown in Figure~\ref{fig:rendezvous_recharging_process} the UAV 1 chooses to leave its task route at the node $a^1_1 \in V(\mathcal{T}^a_1)$ in order to rendezvous with UGV 1 at vertex $g_1^3\in V(\mathcal{T}^g_1)$. The recharging is complete when the UGV reaches node $g_4^1$ and the UAV moves to the next task node $a_1^2$ in its tour $\mathcal{T}^a_i$. Note that if $d=1$, no other UAV can recharge on this UGV between $g_1^3$ and $g_1^4$.

We now define the problem formally on a bipartite graph $G=(V_a\cup V_g, E)$, defined as follows. 

\begin{figure}
    \centering
    \includegraphics[width=0.35 \textwidth]{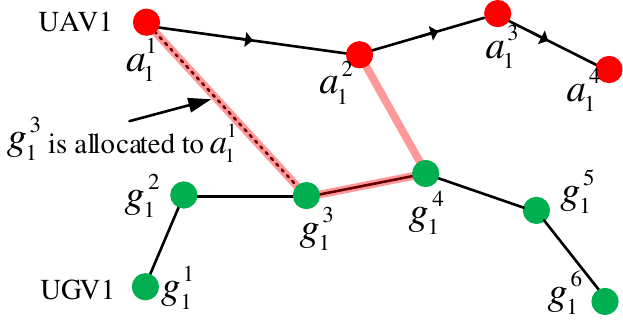}
    \caption{
     Example of a recharging detour. The UAV 1 leaves its task route at the node $a^1_1$ to rendezvous with UGV 1 at the node $g_1^3$. The recharging is done when they reach the node $g_1^4$ and  the UAV moves to the next task node $a_1^2$. }
    \label{fig:rendezvous_recharging_process}
\end{figure}

\begin{itemize}
    \item \emph{UAV vertices:} The vertex set $V_a$ consists of $N_a$ disjoint node sets,  i.e., $V_a = \cup_{i=1}^{N_a}\mathcal{V}_i$ where $\mathcal{V}_i=V(\mathcal{T}^a_i) \cup a^{\emptyset}_i$. The vertex $a^{\emptyset}_i$ represents the scenario where the UAV $i$ chooses not to rendezvous in the current horizon.
    \item \emph{UGV vertices:} The vertex set $V_g$ consists of $\{g^{\emptyset}_1, \ldots, g^{\emptyset}_i, \ldots, g^{\emptyset}_{N_a}\}$ and $d$ copies of the set $\cup_{k=1}^{N_g}  V(\mathcal{T}^g_k)$. The vertex $g^{\emptyset}_i$ for UAV $i$ represents the scenario where UAV $i$ chooses not to rendezvous for the current horizon. The $d$ copies of each vertex in $V(\mathcal{T}^g_k)$ are to represent up to $d$ different UAVs recharging at a time on a UGV.\footnote{We use $d$ copies of each vertex in $V(\mathcal{T}^g_k)$ to keep the analysis of the algorithm simple. Note that the problem can be defined without having $d$ copies for each vertex in $V(\mathcal{T}^g_k)$ by changing Constraint~\eqref{eq:c33} in Problem~\ref{pbm:original} to $\sum_{i}x_{ij}\leq d$.}  
    \item \emph{Edges:} The edge set $E$ denotes the set of all feasible recharging detour options. If the UAV $i$, starting from a node $j\in V(\mathcal{T}^a_i)$, is able to rendezvous with the UGV $k$ at $l\in V(\mathcal{T}^g_k)$, an edge exist between the corresponding two nodes in $V_a$ and $V_g$. The vertex $a^{\emptyset}_i$ is connected to $g^{\emptyset}_i$ for all $i\in[N_a]$.
    \item \emph{Edge cost:} For an edge $(i,j)\in E$ where $i\in V_a$ and $j\in V_g$, the edge cost $c_{ij}$ represents the total time needed to finish the task route given that the recharging detour along edge $(i,j)$ is taken. The time needed for the recharging detour consists of three parts: the time to reach the rendezvous node, the waiting time and the recharging time at the rendezvous node, and the time to go to the next node on the tour. The edge cost along edge $(a^{\emptyset}_i,g^{\emptyset}_i)$  is zero.
    \item \emph{Edge success probability:} For an edge $(i,j)\in E$ where $i\in V_a$ and $j\in V_g$, the edge success probability $p_{ij}$ is defined as the overall probability to finish the task route for the current horizon given that the recharging detour along edge $(i,j)$ is taken. It is the product of the two success probabilities: the probability of a successfully completing the recharging detour, and the probability of finishing the rest of the route after recharging.  The probability along edge $(a^{\emptyset}_i,g^{\emptyset}_i)$ is the probability of reaching the end of the current horizon without recharging.
\end{itemize}

\begin{figure}
    \centering
    \includegraphics[width=0.30\textwidth]{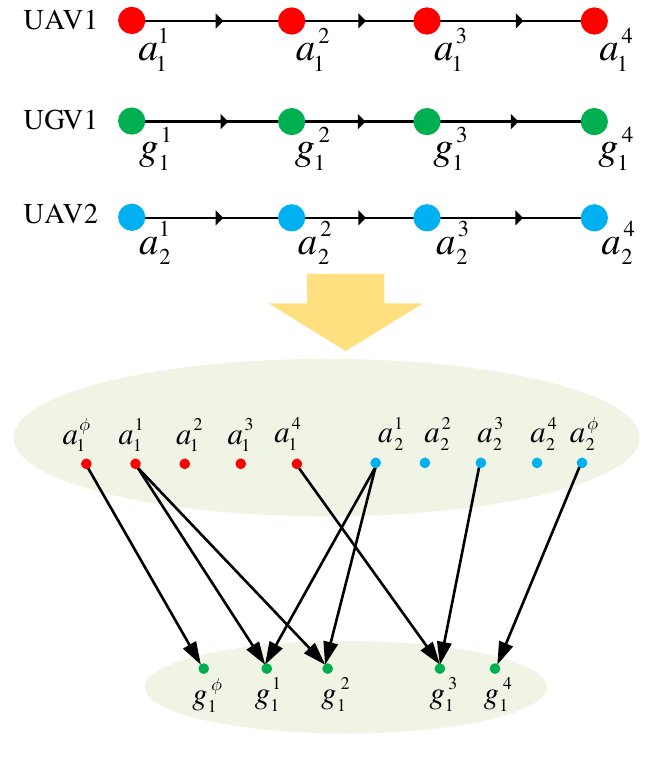}
    \caption{
     An example to show how to construct a bipartite graph for one planning horizon from the UAV and UGV route segments. }
    \label{fig:bipartite_graph}
\end{figure}

One example of the constructed bipartite graph with $d=1$ is shown in Figure \ref{fig:bipartite_graph}.

The finite horizon recharge rendezvous problem can now be defined as the following Integer Linear Program (ILP).

\begin{problem}[Risk-aware Recharging Rendezvous
Problem (RRRP)]
Given a bipartite graph $G=(V_a\cup V_g,E)$ where $V_a = \cup_{i=1}^{N_a}\mathcal{V}_i$, with edge costs $c_{ij}$ and probabilities $p_{ij}$ for edge $(i,j)\in E$, solve:
\label{pbm:original}
\begin{align}
     \min &~\sum_{i,j} c_{ij}x_{ij}& \\
    \text{s.t.}&~\sum_{i,j}  \log{\frac{1}{p_{ij}}} x_{ij} \leq \log{\frac{1}{\rho}}& \label{eq:knapsack1}\\
    &~\sum_{i\in \mathcal{V}_r}\sum_{j} x_{ij} = 1, \quad &\forall r\in [N_a]  \label{eq:c22}\\
    &~\sum_{i} x_{ij} \leq 1,\quad &\forall j\in V_g \label{eq:c33}\\ 
    &~x_{{ij}} \in \{0, 1\}&\label{eq:integral}
\end{align}
\end{problem}

The decision variable $x_{ij}$ indicates whether edge $(i,j)$ is in the solution or not. The objective is to minimize the travel time overhead incurred by the recharging detours. Inequality~\eqref{eq:knapsack1} enforces that the probability that no UAV runs out of charge during the current horizon is at least $\rho$. We can write this as a linear constraint since the stochastic discharging processes of the UAVs are independent. For ease of notation, we will use the following inequality instead of Constraint~\eqref{eq:knapsack1}.
\begin{equation}
\label{eq:knapsack}
    \sum_{i,j}a_{ij}x_{ij}\leq B 
\end{equation}
where $B=\log{1/\rho}$ and $a_{ij}=\log{1/p_{ij}}$.
Constraint~\eqref{eq:c22} enforces that each UAV should be recharged at most once. Constraint~\eqref{eq:c33} enforces that each UGV can recharge at most $d$ UAVs at a time (by making sure that at most one UAV recharges at one of the $d$ copies of a UGV vertex). Given a solution $x_l$, let $M_l$ represent the corresponding solution on graph $G$. Let us define $c(M) = c(x) = \sum c_{ij}x_{ij}$, $a(M) = a(x) = \sum a_{ij}x_{ij} $ for ease of notation.

\begin{remark}
In some applications, it might be desirable to have a constraint on the total detour time while maximizing the probability of robots not running out of charge. Note that although we define the problem with sum of detour times as the objective and with a constraint on probability, the discussion and results in this paper hold with this constraint and the objective swapped as well. 
\end{remark}

\subsection{Hardness}
We now characterize the hardness of Problem~\ref{pbm:original} in the following result.
\begin{lemma}
Problem~\ref{pbm:original} is NP-hard.
\end{lemma}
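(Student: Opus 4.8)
The plan is to prove NP-hardness by a reduction from the classical $0/1$ \textsc{Knapsack} decision problem, exploiting the fact that Problem~\ref{pbm:original} contains \textsc{Knapsack} as a special case once the matching structure is made vacuous. Recall the decision version of \textsc{Knapsack}: given items $i\in[n]$ with nonnegative weights $w_i$ and values $v_i$, a capacity $W$, and a target $V$, decide whether there is a subset $S\subseteq[n]$ with $\sum_{i\in S}w_i\le W$ and $\sum_{i\in S}v_i\ge V$. This problem is NP-complete. I would also note that the decision version of Problem~\ref{pbm:original} lies in NP, since a candidate assignment $x$ can be checked against Constraints~\eqref{eq:knapsack}--\eqref{eq:integral} in polynomial time, so establishing hardness in fact yields NP-completeness.

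First I would build the instance of Problem~\ref{pbm:original} so that the matching constraints never bind. I introduce one UAV per item, i.e. $N_a=n$, and give each UAV its own private pair of UGV vertices: concretely, the block $\mathcal{V}_i$ consists of a single recharging vertex together with $a^{\emptyset}_i$, and the edge set connects the recharging vertex of UAV $i$ only to a dedicated UGV copy and connects $a^{\emptyset}_i$ to $g^{\emptyset}_i$. With this topology every UGV vertex is incident to exactly one UAV, so Constraint~\eqref{eq:c33} is automatically satisfied, and Constraint~\eqref{eq:c22} simply forces each UAV to choose exactly one of its two edges. Hence a feasible $x$ is in bijection with a subset $S\subseteq[n]$, namely the set of UAVs that take their recharging edge.

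Next I encode the numbers. For the recharging edge of UAV $i$ I set the knapsack coefficient to $a_i=w_i$, which is realizable as $a_{ij}=\log(1/p_{ij})$ by choosing $p_{ij}=e^{-w_i}\in(0,1]$, and I set the budget $B=\log(1/\rho)=W$ by choosing $\rho=e^{-W}$; the coefficient on each $(a^{\emptyset}_i,g^{\emptyset}_i)$ edge is $0$. The edge costs carry the values: the recharging edge of UAV $i$ gets cost $c_i=M-v_i$ and the empty edge gets cost $M$, where $M=1+\max_i v_i$ keeps all costs nonnegative. Under the bijection above, Constraint~\eqref{eq:knapsack} becomes $\sum_{i\in S}w_i\le W$, and the objective becomes $\sum_i M-\sum_{i\in S}v_i=nM-\sum_{i\in S}v_i$. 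Minimizing the detour cost therefore maximizes $\sum_{i\in S}v_i$ over weight-feasible subsets, and the answer to the \textsc{Knapsack} instance is ``yes'' if and only if the optimal value of Problem~\ref{pbm:original} is at most $nM-V$.

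The reduction is clearly polynomial in the size of the \textsc{Knapsack} instance, so it remains only to verify the two directions of the equivalence, which follow immediately from the bijection and the two identities above. I expect the only real subtlety, and thus the step to argue most carefully, to be the decoupling of the combinatorial matching constraints from the budget constraint: one must check that the private-vertex gadget makes Constraints~\eqref{eq:c22} and~\eqref{eq:c33} either vacuous or a trivial one-out-of-two choice, so that the sole remaining difficulty is the knapsack-type trade-off between cost and the budget on the $a_{ij}$. The logarithmic encoding of the weights and budget and the cost shift by $M$ are then routine bookkeeping.
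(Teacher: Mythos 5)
Your reduction is correct, but it takes a genuinely different route from the paper, which reduces from \textsc{EvenOddPartition} rather than from $0/1$ \textsc{Knapsack}. The paper builds a bipartite graph with one vertex per index pair $\{2j-1,2j\}$, gives that vertex two outgoing edges whose cost and weight are the two integers $z_{2j-1},z_{2j}$ with the roles swapped between the edges, and sets the budget to $Z=\tfrac12\sum_i z_i$; the symmetry then forces the optimal cost to equal $Z$ exactly on \textsc{Yes} instances and to exceed $Z$ otherwise. Your gadget is structurally the same trick --- make Constraint~\eqref{eq:c33} vacuous by giving each UAV private UGV vertices, so Constraint~\eqref{eq:c22} reduces to a one-out-of-two choice and only the budget Constraint~\eqref{eq:knapsack} carries the hardness --- but you source the hardness from \textsc{Knapsack} by putting the item weight on the $a$-coefficient and the value (complemented by $M$ so costs stay nonnegative) on the cost. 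Both arguments are valid at the level of the abstract ILP, where costs, weights, and the budget are part of the input (the paper's own reduction also assigns these freely rather than deriving them from a physical UAV scenario), and both establish only weak NP-hardness since both source problems are number problems. Your version is arguably the more standard one for a budgeted selection problem and has the bonus of observing membership in NP for the decision version; the paper's version yields a slightly cleaner threshold statement (optimal cost $\le Z$ versus $>Z$) because cost and weight are permutations of one another. Minor points to tidy up if you write this out in full: state explicitly that the reduction works directly with the coefficients $a_{ij}$ and $B$ of Constraint~\eqref{eq:knapsack} (the realizations $p_{ij}=e^{-w_i}$ and $\rho=e^{-W}$ are irrational and only needed to show such instances exist), and note that your instance assigns the no-recharge edge a cost of $M$ rather than the zero cost used in the modeling section, which is fine for hardness of the ILP but worth flagging.
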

\begin{proof}
We show the hardness by using a reduction from the problem \textsc{EvenOddPartition} which is shown to be NP-complete in~\cite{johnson1979computers}.

In an instance of \textsc{EvenOddPartition}, we are given a finite set $\{z_1,\ldots,z_{2\ell}\}$ of positive integers, and the decision problem is to find whether there exists a subset $I\subseteq \{1,\ldots,2\ell\}$ such that $\sum_{i\in I}z_i = \sum_{i\notin I}z_i$ and $|I\cap \{z_{2j-1},z_{2j}\}|=1$ for each $j\in \{1,\ldots,\ell\}$.  

Given an instance of \textsc{EvenOddPartition}, we construct an instance of Problem~\ref{pbm:original} as follows. Consider a bipartite graph $G=(V_1\cup V_2,E)$ where $V_1$ consists of one vertex for each $j\in\{1,\ldots,\ell\}$ and $V_2$ consists of one vertex each for $j\in\{1,\ldots,2\ell\}$. Connect vertex $j\in V_1$ to vertex $2j \in V_2$ and to vertex $2j-1\in V_2$. Let us denote these edges by $e_{2j}$ and $e_{2j-1}$ respectively. The cost and weight on edge $e_{2j}$, i.e., $c(e_{2j})$ and $a(e_{2j})$ are $z_{2j}$ and $z_{2j-1}$ respectively. The cost and weight on edge $e_{2j-1}$ is $z_{2j-1}$ and $z_{2j}$ respectively. Exactly one outgoing edge from each vertex in $V_1$ should be selected in the solution. Finally we set the value of maximum allowed weight (equivalent to $B$ in Constraint~\eqref{eq:knapsack})  to $Z=\frac{1}{2}\sum_{i=1}^{2\ell}z_i$.

If the instance of \textsc{EvenOddPartition} is a \textsc{Yes} instance, then there exists a solution in the bipartite graph $G$ such that each vertex in $V_1$ has exactly one outgoing edge in the solution and the sum of weights on that solution is $Z$. Moreover, by the construction of $G$, the sum of costs of that solution is also $G$.

If the instance of \textsc{EvenOddPartition} is a \textsc{No} instance, then either the weight or the cost on any solution that has exactly one outgoing edge from each $j\in V_1$, is more than $Z$. Since the solutions with a weight of more than $Z$ are infeasible, the minimum cost of any feasible solution is more than $Z$. 

Therefore, if the optimal cost in the constructed bipartite graph is more than $Z$, then the corresponding \textsc{EvenOddPartition} is a \textsc{No} instance, and it is a \textsc{Yes} instance otherwise.
\end{proof}

We present a bicriteria approximation algorithm for Problem~\ref{pbm:original} in this paper. The main result regarding the algorithm is summarized below and we provide a detailed analysis in the next section.

\begin{theorem}
\label{thm:main}
For a given $\epsilon\in(0,1]$, there is a $(1+\epsilon,2)$-bicriteria approximation algorithm for Problem~\ref{pbm:original}, i.e., we get a solution $x$ such that 
\begin{itemize}
    \item $c(x)\leq (1+\epsilon)\texttt{opt} $, and
    \item $\sum_i x_{ij}\leq 2$,  \quad $\forall ~j\in V_g$. 
\end{itemize}
\end{theorem}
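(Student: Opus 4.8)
The plan is to follow the Lagrangian-relaxation route that the introduction advertises for budgeted matching. The first observation is that dropping the knapsack constraint \eqref{eq:knapsack} leaves a pure bipartite matching problem: each UAV group $\mathcal{V}_r$ selects exactly one edge \eqref{eq:c22} and each UGV vertex is used at most once \eqref{eq:c33}, and its constraint matrix is totally unimodular, so it is solvable exactly in polynomial time as a min-cost matching. I would dualize the single knapsack row with a multiplier $\lambda\ge 0$, giving the parametric objective $\min_x\sum_{i,j}(c_{ij}+\lambda a_{ij})x_{ij}$ over matchings with Lagrangian value $L(\lambda)=\min_x[c(x)+\lambda(a(x)-B)]$. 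Weak duality then yields $L(\lambda)\le\texttt{opt}$ for every $\lambda\ge 0$, because any feasible $x$ satisfies $a(x)\le B$.

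Next I would binary-search on $\lambda$. As $\lambda$ grows, the optimal matching trades detour cost for knapsack weight, so $a(M_\lambda)$ is monotone and crosses $B$ at a critical $\lambda^\star$. At this breakpoint I would extract two matchings that are both optimal for the weights $c_{ij}+\lambda^\star a_{ij}$: a knapsack-feasible $M_1$ with $a(M_1)\le B$ and a knapsack-violating $M_2$ with $a(M_2)>B$. Optimality at $\lambda^\star$ gives $c(M_2)=L(\lambda^\star)-\lambda^\star(a(M_2)-B)\le L(\lambda^\star)\le\texttt{opt}$, so $M_2$ is cheap but infeasible, while $M_1$ is feasible but possibly expensive. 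I would additionally invoke the standard extreme-point structure: a basic optimal solution of the LP with the knapsack row adjoined to the (integral) matching polytope is a convex combination $\alpha M_1+(1-\alpha)M_2$ of two \emph{adjacent} integral matchings, so $M_1$ and $M_2$ may be taken to differ only on a single alternating cycle $C=M_1\triangle M_2$ and to agree everywhere else.

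Then I would merge the two matchings along $C$. The key combinatorial fact is that if each UAV keeps exactly one of its two candidate edges --- some UAVs taking their $M_1$-edge and the rest their $M_2$-edge --- then every UGV vertex is touched by at most one $M_1$-edge and at most one $M_2$-edge, hence at most twice. This is precisely the factor-$2$ relaxation of \eqref{eq:c33} while \eqref{eq:c22} stays tight. Starting from $M_2$ and switching the cycle UAVs to their $M_1$-edges one at a time drives the knapsack weight monotonically from $a(M_2)>B$ down to $a(M_1)\le B$, so some switched set $S$ is the first to reach $a\le B$. The resulting solution $\tilde M$ is knapsack-feasible, matches each UAV exactly once, and uses each UGV vertex at most twice.

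The main obstacle is the cost guarantee $c(\tilde M)\le(1+\epsilon)\,\texttt{opt}$. Since $c(M_2)\le\texttt{opt}$, it suffices to bound the extra cost contributed by the switches in $S$. Here I would exploit that $M_1$ and $M_2$ carry equal reduced weight on $C$ (both are optimal at $\lambda^\star$ and identical off $C$), which pins the total cost change of a full-cycle switch to $\lambda^\star$ times the knapsack change; the difficulty is that a \emph{partial} switch set $S$ does not inherit this identity termwise, so I must isolate the single ``critical'' switch that crosses $B$ and account for it separately. To convert the residual slack of this one uncrossed edge into a multiplicative $1+\epsilon$ rather than a crude constant factor, I expect to need a cost-scaling / guessing step: guess $\texttt{opt}$ within a $(1+\epsilon)$ factor, discard edges too expensive to appear in a near-optimal solution, and calibrate the binary-search precision on $\lambda$ to the resulting cost granularity. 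Making this accounting tight --- so that the combined effect of the breakpoint duality gap and the lone critical swap is at most $\epsilon\,\texttt{opt}$, while feasibility of the knapsack and the factor-$2$ bound on \eqref{eq:c33} are both preserved --- is where the real work lies.
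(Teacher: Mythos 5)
Your setup is the same as the paper's: dualize the knapsack row, binary-search $\lambda$ to the breakpoint, extract two adjacent optimal matchings $M_1$ (feasible) and $M_2$ (cheap but infeasible) whose symmetric difference is a single alternating component, and exchange along that component to get a solution that keeps \eqref{eq:c22} tight while relaxing \eqref{eq:c33} by a factor of two. But the two steps you defer are exactly the load-bearing ones, and neither is resolved by what you sketch. First, the partial-exchange cost bound: you correctly observe that a partial switch set does not inherit the equal-reduced-weight identity termwise, but the fix is not just to ``isolate the critical switch.'' The paper chooses \emph{where on the alternating cycle to start} the exchange, using the cyclic-partial-sums fact (\cite[Lemma 3]{berger2011budgeted}) that some starting edge $z_i$ makes every cyclic partial sum of the signed reduced weights $\pm w_\lambda(z_j)$ non-positive. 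That choice is what yields $c(M_1\oplus Z')\le \texttt{opt}$ for \emph{every} prefix $Z'$ of the exchange, so that trimming at the knapsack crossing costs only an additive $c_{\texttt{max}}$ (at most two edges are dropped). Without it the cost of your intermediate solution $\tilde M$ is not controlled. (Also, your claim that switching cycle UAVs one at a time drives $a$ \emph{monotonically} from $a(M_2)$ down to $a(M_1)$ is false in general; individual switches can move $a$ in either direction, which is why the paper phrases the trimming in terms of the smallest prefix crossing $B$ rather than a monotone descent.)

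Second, the conversion of the additive $c_{\texttt{max}}$ error into a multiplicative $1+\epsilon$: your proposal to ``guess $\texttt{opt}$ within a $(1+\epsilon)$ factor and discard edges too expensive to appear in a near-optimal solution'' does not work, because the optimal solution can legitimately contain an edge whose cost is a constant fraction of $\texttt{opt}$ (even all of it); discarding such edges destroys optimality, and keeping them leaves $c_{\texttt{max}}=\Theta(\texttt{opt})$. The paper's mechanism is different: enumerate (over $O(N_a^{1/\epsilon})$ candidates) the $p=\lceil 1/\epsilon\rceil$ \emph{highest-cost edges of $M^*$ itself}, commit to them, delete their UAV groups and UGV vertices, reduce $B$ by their weight, and prune all remaining edges costlier than the cheapest guessed one. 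In the residual instance $c_{\texttt{max}}\le c(M^*_H)/p\le \epsilon\,c(M^*_H)\le\epsilon\,\texttt{opt}$, so Lemma~\ref{lem:d+1}'s additive guarantee becomes $c(M^*_H)+c(M_L)\le c(M^*)+\epsilon\,c(M^*)$. Your write-up explicitly flags this accounting as ``where the real work lies,'' and indeed it is: as it stands the proposal does not establish the $(1+\epsilon)$ cost bound.
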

Note that the approximation algorithm gives a solution where one of the UGV's may have to recharge at most $d+1$ UAVs at a time, and since $d\geq 1$, it results in violation of Constraint~\eqref{eq:c33} by a factor of two as mentioned in Theorem~\ref{thm:main}.

{ \section{Algorithms and Analysis}}

Although Problem~\ref{pbm:original} can be solved using ILP solvers, as the number of variables in the problem increases, the runtime of the solver will become intractable. Since we need to solve this problem repeatedly in a receding horizon approach, an efficient solver is desirable. In this section, we present a series of algorithms to solve Problem~\ref{pbm:original}, that build on top of each other. Algorithm~\ref{alg:alg0} is a heuristic algorithm that uses binary search on the Lagrangian multiplier to find a feasible solution. Algorithm~\ref{alg:alg1} uses Algorithm~\ref{alg:alg0} to provide two solutions that are adjacent in the solution polytope, one of them being feasible. A bicriteria approximation algorithm is presented in Algorithm~\ref{alg:alg2}, which uses Algortithm~\ref{alg:alg1} as a subroutine. The bicriteria approximation algorithm may violate one of the constraints of the problem (one of the UGVs may need to recharge up to $d+1$ UAVs at a time). In practice, we can use Algorithm~\ref{alg:alg2} to solve the problem and if the solution violates the constraint, we can use the solution returned by Algorithm~\ref{alg:alg1}.

\begin{algorithm}[t]
	\caption{\textsc{BinarySearch}}
	\label{alg:alg0}
	\begin{algorithmic}[1]
	\Statex Input: Graph $G=(V_a\cup V_g,E)$ with edge costs $c$ and weights $a$, bound $B$
  \vspace{0.2em}
  \hrule
  \vspace{0.2em}
    \State Set a threshold $\Delta\lambda_{\texttt{min}}$
    \State $\lambda_l\leftarrow 0$, $\lambda_l\leftarrow \infty$
    \State Start from a positive value of $\lambda$
    \While {$\lambda_u-\lambda_l\geq\Delta\lambda_{\texttt{min}}$}
    \State Solve Problem~\ref{pbm:lagrange} using $\lambda$ to get solution $x$
    \If {$a(x)\leq B$}
    \State $M_1\leftarrow x$
    \State $\lambda_u\leftarrow \lambda$
    \State $\lambda\leftarrow {(\lambda_u+\lambda_l)}/{2}$
    \Else 
    \State $M_2\leftarrow x$
    \State $\lambda_l\leftarrow \lambda$
    \State $\lambda\leftarrow \min\{2\lambda,\lambda_u\}$
    \EndIf 
    \EndWhile
    \State \Return $M_1, M_2, \lambda$
    
	\end{algorithmic}
\end{algorithm}

\begin{algorithm}[t]
	\caption{\textsc{LocalSearch}}
	\label{alg:alg1}
	\begin{algorithmic}[1]
	\Statex Input: Graph $G=(V_a\cup V_g,E)$ with edge costs $c$ and weights $a$, bound $B$
  \vspace{0.2em}
  \hrule
  \vspace{0.2em}
    \State Get $\lambda$, $M_1$ and $M_2$ such that $a(M_1)\leq B \leq a(M_2)$ using Algorithm~\ref{alg:alg0} or~\cite{berger2011budgeted}\label{algln:lambda}
    \State $M'\leftarrow M_1\oplus M_2$
    \For {connected components $Y$ in $M'$}
    \If {$M'$ has one connected component}
    \State     \Return $M_1$, $M_2$, $\lambda$
    \Else
    \If {$a(M_1\oplus Y)\leq B$}
    \State $M_1\leftarrow M_1\oplus Y$
    \Else
    \State $M_2\leftarrow M_1\oplus Y$
    \EndIf
    \EndIf
    \State Remove $Y$ from $M'$
    \EndFor
	\end{algorithmic}
\end{algorithm}

\begin{algorithm}[ht]
	\caption{\textsc{RendezvousScheduling}}
	\label{alg:alg2}
	\begin{algorithmic}[1]
	\Statex Input: Graph $G=(V_a\cup V_g,E)$ with edge costs $c$ and weights $a$, bound $B$, $\epsilon\in(0,1]$
  \vspace{0.2em}
  \hrule
  \vspace{0.2em}
  \State $M^*_H\leftarrow$ Guess $p=\lceil\frac{1}{\epsilon}\rceil$ edges of highest cost in $M^*$
  \State $E\leftarrow E\setminus\{e:c(e)\geq \min_{e'\in M^*_H}{c(e')}\}$ 
  \For {$(i,j)\in M_H^*$}
  \State $V_a\leftarrow V_a\setminus\{\mathcal{V}_r:i\in\mathcal{V}_r\}$
  \State $V_g\leftarrow V_g\setminus \{j\}$
  \EndFor
  \State $B\leftarrow B - a(M^*_H)$
  \State Find $M_1$, $M_2$ and $\lambda$ from Algorithm~\ref{alg:alg1}
    \State $Z\leftarrow M_1\oplus M_2$ \label{algln:symm_diff}
    \For {edge $z_i$ in $Z=\{z_0,z_1,\ldots,z_{k-1}\}$}
    \If {$z_i\in M_1$} $\alpha_i=w_\lambda(z_i)$
    \EndIf
    \If {$z_i\in M_2$} $\alpha_i= - w_\lambda(z_i)$
    \EndIf
    \EndFor
    \State Find $z_i$ such that $\sum_{j=i}^{i+h}\alpha_{i\Mod k} \leq 0$ for all $h$
    \State Find the longest sequence $Z''$ starting from $z_i$ such that $a(M_1\oplus Z')\leq B$
    \If {Last edge of $Z''$ is not in $M_2$}
    \State Remove the last edge of $Z''$ 
    \EndIf
    \State $M_L\leftarrow M_1\oplus Z''$ 
    \If {$M_L$ has two edges connected to $\mathcal{V}_r$ for some $r$}
    \State Remove one of those edges from $M_L$
    \EndIf
    \If {$M_L$ has two edges $(i_1,j)$ and $(i_2,j)$ for a $j\in V_g$}
    \If {$\exists$ free $j'$ such that $a(i_\ell,j')\leq a(i_\ell,j)$ }
    \State $M_L \leftarrow M_L\setminus \{(i_\ell,j)\}\cup \{(i_\ell,j')\}$
    \EndIf
    \EndIf
    \State \Return $M_L\cup M^*_H$ \label{algln:return}
	\end{algorithmic}
\end{algorithm}

We start the analysis of our algorithms by considering the following problem :
\begin{problem}
\label{pbm:lagrange}
\begin{align}
     \min &~\sum c_{ij}x_{ij} +\lambda (\sum_{i,j}  a_{ij} x_{ij}-B),
\end{align}
such that Constraints~\eqref{eq:c22}, \eqref{eq:c33} and~\eqref{eq:integral} hold.
\end{problem}

Let $w_\lambda(M)=w_\lambda(x) = c(x) +\lambda a(x)$. Also, let $\mathcal{S}$ denote the polytope formed by the set of constraints~\eqref{eq:c22} and~\eqref{eq:c33}. We first show that Probelm~\ref{pbm:lagrange} is solvable in polynomial time. 
\begin{lemma}
\label{lem:polynomial}
Problem~\ref{pbm:lagrange} is solvable in $O(|E|\log{|V|} (|E| + |V| \log(V))$ time on graph $G(V_a\cup V_b,E)$ where $V=V_a\cup V_b$.
\end{lemma}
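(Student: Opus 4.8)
The plan is to recognize Problem~\ref{pbm:lagrange}, for a fixed multiplier $\lambda$, as a minimum-cost bipartite matching instance and to solve it with a strongly polynomial min-cost flow routine. First I would discard the additive constant: for fixed $\lambda\geq 0$ the objective equals $\sum_{i,j} w_\lambda(e_{ij})\,x_{ij} - \lambda B$ with $w_\lambda(e_{ij}) = c_{ij} + \lambda a_{ij}\geq 0$ (using $c_{ij},a_{ij}\geq 0$), so minimizing over $\mathcal{S}$ subject to~\eqref{eq:integral} is the same as finding a minimum-$w_\lambda$ integral point of $\mathcal{S}$, and the weights are nonnegative.

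Next I would expose the matching structure hidden in the constraints. Reading~\eqref{eq:c22} and~\eqref{eq:c33} together, every column of the constraint matrix (one column per edge $(i,j)\in E$) has exactly two nonzero entries: a $1$ in the row of the unique UAV group $\mathcal{V}_r$ containing $i$, and a $1$ in the row of the UGV vertex $j$. Since the group rows and the UGV-vertex rows partition all rows, this is exactly the vertex-edge incidence matrix of a bipartite graph whose two sides are the $N_a$ UAV groups and the UGV vertices; the matrix is therefore totally unimodular, so the polytope $\mathcal{S}\cap\{0\leq x\leq 1\}$ is integral and~\eqref{eq:integral} can be dropped. I would then realize this as a min-cost flow network: a source sends one unit into each UAV-group node, each original edge becomes a unit-capacity arc of cost $w_\lambda$ from its group node to its UGV vertex, and each UGV vertex sends one unit to a sink. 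A flow of value $N_a$ always exists because each UAV $r$ owns the private arc $(a^{\emptyset}_r, g^{\emptyset}_r)$, so in the worst case every UAV routes its unit through its no-recharge option; by the integrality of min-cost flows an optimal flow is $\{0,1\}$-valued and recovers an optimal matching, i.e. a minimum-weight matching saturating every UAV group while touching each UGV vertex at most once.

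Finally I would bound the running time by invoking Orlin's strongly polynomial min-cost flow algorithm on this network, which runs in $O(|E|\log|V|\,(|E| + |V|\log|V|))$ time on a graph with $|V|$ vertices and $|E|$ arcs, matching the claimed bound exactly. I expect the main obstacle to be the bookkeeping that turns the grouped equality constraint~\eqref{eq:c22} into a clean flow instance without blowing up the size of the network: in particular, confirming that the two-nonzero-per-column structure really does give a bipartite incidence matrix (so that total unimodularity and flow integrality both apply), handling the parallel arcs created when several vertices of one group $\mathcal{V}_r$ reach the same UGV vertex by keeping only the minimum-$w_\lambda$ copy, and checking that the vertex and arc counts of the flow network are $O(|V|)$ and $O(|E|)$ so that Orlin's bound transfers verbatim to the stated complexity.
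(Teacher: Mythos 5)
Your proposal is correct and takes essentially the same approach as the paper: both reduce Problem~\ref{pbm:lagrange} for fixed $\lambda$ to a minimum-cost flow of value $N_a$ on a network with a source feeding one unit toward each UAV group, unit-capacity arcs of cost $c_{ij}+\lambda a_{ij}$, and a sink collecting from the UGV vertices, then invoke flow integrality and Orlin's algorithm for the stated bound. The only cosmetic difference is that the paper keeps the individual vertices of each group $\mathcal{V}_r$ and routes them through an intermediate collector node $v_r$ rather than contracting the group (which also sidesteps your parallel-arc bookkeeping).
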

\begin{proof}
We can solve Problem~\ref{pbm:lagrange} using a minimum cost network flow problem. Given the bipartite graph $G=(V_a \cup V_g,E)$, introduce $N_a$ nodes $v_1,\ldots,v_{N_a}$. Connect the vertices in $\mathcal{V}_i$ to $v_i$ and connect all vertices $v_1,\ldots,v_{N_a}$ to a source vertex $s$. Connect all the vertices in $V_g$ to a sink vertex $q$. The cost of all the edges $(i,j)\in E$ is set to $c_{ij}+\lambda a_{ij}$. The cost of all other edges is zero. The capacity of all edges is one and $N_a$ amount of flow is to be sent from $s$ to $q$. This instance has $O(|E|)$ edges and $O(|V_a|+|V_b))$ vertices. A solution to this minimum cost network flow problem will satisfy Constraints~\eqref{eq:c22} and~\eqref{eq:c33}. Since the total flow and capacities are integers, there exist integer solutions satisfying Constraint~\eqref{eq:integral}, that can be found using network flow algorithms such as Orlin's algorithm~\cite{orlin1988faster} which runs in $O(|E|\log{|V|} (|E| + |V| \log(V))$ time on a graph with $|V|$ vertices and $|E|$ edges.
\end{proof}

Since Problem~\ref{pbm:lagrange} is solvable in polynomial time, the following observation enables us to use binary search in Algorithm~\ref{alg:alg0} to find two solutions $M_1$ and $M_2$ such that $a(M_1)\leq B\leq a(M_2)$.\footnote{Note that if no solution $M_2$ exists such that $a(M_2)>B$, then solving Problem~\ref{pbm:lagrange} with $\lambda=0$ solves Problem~\ref{pbm:original}.} 
\begin{observation}
Let $x_1$ and $x_2$ be the optimal solutions to Problem~\ref{pbm:lagrange} with Lagrangian multipliers $\lambda_1$ and $\lambda_2$, where $\lambda_1>\lambda_2$. Then $a(x_1)\leq a(x_2)$, and $c(x_1)\geq c(x_2)$. 
\end{observation}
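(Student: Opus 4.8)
The plan is to use the standard exchange argument for Lagrangian relaxations. First I would observe that for a fixed multiplier $\lambda$, the term $-\lambda B$ in the objective of Problem~\ref{pbm:lagrange} is a constant independent of the decision variables, so a minimizer of Problem~\ref{pbm:lagrange} is exactly a minimizer of $w_\lambda(x) = c(x) + \lambda a(x)$ over the feasible set (the polytope $\mathcal{S}$ together with the integrality constraint~\eqref{eq:integral}). Thus $x_1$ minimizes $w_{\lambda_1}$ and $x_2$ minimizes $w_{\lambda_2}$, and crucially both remain feasible points for either subproblem since the feasible set does not depend on $\lambda$.

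Next I would write down the two optimality inequalities obtained by testing each optimum against the other feasible point. Since $x_1$ is optimal for $\lambda_1$ and $x_2$ is feasible,
\[ c(x_1) + \lambda_1 a(x_1) \leq c(x_2) + \lambda_1 a(x_2), \]
and since $x_2$ is optimal for $\lambda_2$ and $x_1$ is feasible,
\[ c(x_2) + \lambda_2 a(x_2) \leq c(x_1) + \lambda_2 a(x_1). \]

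To obtain the first claim I would add these two inequalities. The $c(x_1)$ and $c(x_2)$ terms cancel on both sides, leaving $(\lambda_1 - \lambda_2)\,a(x_1) \leq (\lambda_1 - \lambda_2)\,a(x_2)$. Dividing by $\lambda_1 - \lambda_2 > 0$ yields $a(x_1) \leq a(x_2)$. For the cost claim I would rearrange the second optimality inequality into $c(x_2) - c(x_1) \leq \lambda_2\bigl(a(x_1) - a(x_2)\bigr)$; since the first part gives $a(x_1) - a(x_2) \leq 0$ and the multiplier satisfies $\lambda_2 \geq 0$, the right-hand side is nonpositive, so $c(x_2) \leq c(x_1)$, i.e. $c(x_1) \geq c(x_2)$.

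There is essentially no hard step here: the statement is the familiar monotonicity of the optimal primal weight and cost as functions of the multiplier, and the whole argument reduces to adding two inequalities. The only point requiring a moment's care is ensuring $\lambda_2 \geq 0$ for the cost inequality, which I would justify by noting that the multipliers are nonnegative throughout (Algorithm~\ref{alg:alg0} initializes $\lambda_l = 0$ and only ever queries nonnegative values of $\lambda$).
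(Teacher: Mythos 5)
Your proof is correct: the paper states this as an \emph{observation} without proof, and your two optimality inequalities, summed to get $(\lambda_1-\lambda_2)\bigl(a(x_1)-a(x_2)\bigr)\leq 0$ and then rearranged for the cost claim, constitute exactly the standard exchange argument the paper is implicitly relying on. Your care about $\lambda_2\geq 0$ is warranted and correctly resolved, since all multipliers considered in Algorithm~\ref{alg:alg0} are nonnegative.
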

Although the run time of Algorithm~\ref{alg:alg0} depends on the stopping threshold value $\Delta\lambda_{\texttt{min}}$, an optimal Lagrangian multiplier $\lambda$ and two solutions $x_1$ and $x_2$ to Problem~\ref{pbm:lagrange}, with $w_{\lambda}(x_1)=w_{\lambda}(x_2)$ and satisfying $a(x_1)\leq B \leq a(x_2)$ and $c(x_1)\geq c(x_2)$ can be found in polynomial time~\cite{berger2011budgeted},~\cite[Theorem 24.3]{schrijver1998theory}. We can use the procedure from ~\cite{berger2011budgeted} and~\cite[Theorem 24.3]{schrijver1998theory} or Algorithm~\ref{alg:alg0} in Line~ ~\ref{algln:lambda} of Algorithm~\ref{alg:alg1} to find $M_1$, $M_2$ and $\lambda$. We use Algorithm~\ref{alg:alg0} in experiments to find $M_1$ and $M_2$ as it is simple to implement and works well in practice as seen in Section~\ref{sec:exp}. Since $w_\lambda(M_1)\leq w_\lambda(M^*)$ where $M^*$ is the optimal solution to Problem~\ref{pbm:original} with cost $\texttt{opt}$, we have,
\begin{align}
\begin{split}
\label{eq:opt}
    w_\lambda(M_1) - \lambda B&\leq  w_\lambda(M^*) - \lambda B \\
    &\leq w_\lambda(M^*) - \lambda a(M^*) \\
    &= c(M^*)=\texttt{opt}.
\end{split}
\end{align}

We will need the following definition for the analyses of Algorithms~\ref{alg:alg1} and~\ref{alg:alg2}.
\begin{definition}
Given the bipartite graph $G=(V_a\cup V_g,E)$ where $V_a = \cup_{i=1}^{N_a}\mathcal{V}_i$, consider the bipartite graph $G'$ with the vertices for each robot merged together, i.e.,  $G'=(V_c\cup V_g,E')$ where $V_c = \{\mathcal{V}_1,\ldots,\mathcal{V}_{N_a}\}$, and $E$' has an edge between $\mathcal{V}_r$ and $j\in V_g$ if $E$ has an edge between some $i\in \mathcal{V}_r$ and $j$. Then with abuse of notation, we define a subgraph $H$ of $G$ as a \emph{connected component} if the edges of $H$ form a connected path or cycle in $G'$.
\end{definition}
Now we show a property of adjacent extreme points in the solution polytope $\mathcal{S}$ similar to a property of graph matchings shown in~\cite{balinski1974assignment}.
\begin{lemma}
\label{lem:adjacent}
Two vertices $x_1$ and $x_2$ of $\mathcal{S}$ are adjacent if and only if the symmetric difference of their corresponding solutions $M_1$ and $M_2$ contain exactly one connected component. 
\end{lemma}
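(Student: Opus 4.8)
The plan is to combine the standard geometric criterion for adjacency in a polytope with the combinatorial structure of $\mathcal{S}$. First I would record the preliminaries. The constraint matrix defining $\mathcal{S}$ (Constraints~\eqref{eq:c22} and~\eqref{eq:c33} together with $x\geq 0$) is totally unimodular, being the incidence structure of the network-flow formulation of Lemma~\ref{lem:polynomial}; hence every vertex of $\mathcal{S}$ is $0/1$ and corresponds to a selection $M$ of edges with exactly one edge incident to each group $\mathcal{V}_r$ and at most one edge incident to each $j\in V_g$. Viewed in $G'$, such an $M$ is a matching saturating every $V_c$-vertex, and for two solutions $M_1,M_2$ each group vertex $\mathcal{V}_r$ has degree $0$ or $2$ in $M_1\oplus M_2$ (degree $2$ precisely when its $M_1$- and $M_2$-edges differ), while each $j\in V_g$ has degree $0,1$, or $2$. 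Consequently every connected component of $M_1\oplus M_2$ is a simple cycle or a simple path whose two endpoints lie in $V_g$, with edges alternating between $M_1$ and $M_2$. I would also use the standard (and generally valid) adjacency criterion: if there are vertices $y_1\neq y_2$ of $\mathcal{S}$ with $\{y_1,y_2\}\neq\{x_1,x_2\}$ and $x_1+x_2=y_1+y_2$, then the midpoints of $[x_1,x_2]$ and $[y_1,y_2]$ coincide, so $[x_1,x_2]$ is not a $1$-face and $x_1,x_2$ are non-adjacent.

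For the ($\Rightarrow$) direction I would argue the contrapositive. Suppose $M_1\oplus M_2$ has at least two connected components and let $Y$ be one of them. Set $M_1'=M_1\oplus Y$ and $M_2'=M_2\oplus Y$. Using the alternating path/cycle structure of $Y$, together with the fact that path endpoints are unconstrained $V_g$-vertices, I would verify that toggling a single component preserves feasibility, so $M_1',M_2'$ are again vertices of $\mathcal{S}$; a short case check over the three edge types (those in $M_1\cap M_2$, those in $Y$, and those in $(M_1\oplus M_2)\setminus Y$) shows $x_1'+x_2'=x_1+x_2$. Since $Y$ is a proper nonempty subset of $M_1\oplus M_2$, the pair $\{M_1',M_2'\}$ differs from $\{M_1,M_2\}$, and the criterion above yields non-adjacency.

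For the ($\Leftarrow$) direction, assuming $M_1\oplus M_2$ is a single component, I would exhibit a linear objective whose minimizing face is exactly $\mathrm{conv}(x_1,x_2)$. Define $w_e=0$ for $e\in M_1\cup M_2$ and $w_e=1$ otherwise. Then $\min_{x\in\mathcal{S}} w\cdot x = 0$, attained by $x_1$ and $x_2$, and the optimal face $F$ consists of those vertices of $\mathcal{S}$ whose support lies in $M_1\cup M_2$. The crux is to show $F=\{x_1,x_2\}$: every group vertex not on the component is forced to take its unique common edge in $M_1\cap M_2$, and on the single alternating path/cycle the requirements ``exactly one edge at each $\mathcal{V}_r$, at most one at each $j$'' admit precisely the two alternating selections $M_1$ and $M_2$. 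Hence $F=\mathrm{conv}(x_1,x_2)$ is a one-dimensional face, i.e., $x_1$ and $x_2$ are adjacent.

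I expect the forcing claim in the ($\Leftarrow$) direction --- that a single connected component in $G'$ admits exactly the two feasible sub-selections $M_1$ and $M_2$ --- to be the main obstacle, since this is where the asymmetry between the equality constraints at the $\mathcal{V}_r$-vertices and the inequality constraints at the $V_g$-vertices is essential: the alternation must be initiated at a degree-one $V_g$ endpoint of a path (or at an arbitrary edge of a cycle) and propagated consistently, and the argument must also handle the bookkeeping subtlety that two distinct edges of $G$ incident to the same group $\mathcal{V}_r$ can collapse to a single edge of $G'$.
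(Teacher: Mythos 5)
Your ($\Rightarrow$) direction---the contrapositive via toggling one component $Y$ and observing $\tfrac12(x_1+x_2)=\tfrac12(x_1'+x_2')$---is exactly the paper's argument (the paper writes $M_3=M_1\oplus Z$, $M_4=M_2\oplus Z$), and your explicit feasibility check for the toggled solutions is a welcome addition that the paper omits. The gap is in the ($\Leftarrow$) direction, and it sits precisely where you predicted: the forcing claim is \emph{false} when the single connected component is a path. Take the path $j_0-\mathcal{V}_1-j_1-\mathcal{V}_2-j_2$ in $G'$ with edges $e_1,e_2,e_3,e_4$ in order, where $M_1$ uses $\{e_1,e_3\}$ and $M_2$ uses $\{e_2,e_4\}$ (the two solutions agreeing elsewhere). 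The selection $\{e_1,e_4\}\cup(M_1\cap M_2)$ satisfies \eqref{eq:c22} and \eqref{eq:c33}---it simply leaves $j_1$ uncovered, which the \emph{inequality} constraint permits---and is supported on $M_1\cup M_2$, so it is a third vertex on the optimal face of your objective $w$. More generally, for a path component with $m$ group vertices the feasible sub-selections supported on $M_1\cup M_2$ are exactly the $m+1$ choices ``take the $M_1$-edge at $\mathcal{V}_1,\dots,\mathcal{V}_k$ and the $M_2$-edge at $\mathcal{V}_{k+1},\dots,\mathcal{V}_m$'' (any other pattern doubly covers some $j$), so the face cut out by $w$ is an $m$-simplex, not the segment $\mathrm{conv}(x_1,x_2)$. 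Your argument does go through verbatim for cycle components, where every $j$ on the cycle has degree two and must be covered exactly once, pinning down the two alternating matchings.

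The lemma is still true in the path case---all pairs of vertices of a simplex are adjacent---but your proof does not establish it as written. Two repairs are available: (i) keep your objective, but then show that the $m+1$ optimal vertices are affinely independent, so the optimal face is a simplex and $[x_1,x_2]$ is one of its edges; or (ii) replace the uniform $0/1$ weights on the component by a non-uniform weighting that ties $M_1$ with $M_2$ but makes every mixed selection strictly worse (e.g., weight the $M_1$-edge at $\mathcal{V}_i$ by $i$ and the $M_2$-edge by $m+1-i$; the resulting value of the $k$-th mixed selection is strictly convex in $k$, so only the two extremes survive under the appropriate optimization sense). For what it is worth, the paper's own construction (costs $1$ and $h_1/h_2$ on the two sides of the symmetric difference) is constant on each side and therefore suffers from the same deficiency on path components; your write-up is at least honest about where the difficulty lies, but the step still needs one of the fixes above to be a proof.
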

\begin{proof}
The symmetric difference of $M_1$ and $M_2$ is a union of connected components (paths and cycles) because each edge in the symmetric difference can have a degree at most two. Suppose that $M_1\oplus M_2$ contains exactly one connected component. Then we can construct an objective function (edge costs) such that $M_1$ and $M_2$ are the only two optimal extreme point solutions as follows. Let the number of edges in $M_1 \oplus M_2$ from $M_1$ and $M_2$ be $h_1$ and $h_2$ respectively. Assign a cost of one to all edges in $(M_1\cap M_1\oplus M_2)$ , a cost of $h_1/h_2$ to all edges in $(M_2\cap M_1\oplus M_2)$, a cost of zero to all edges in $M_1\cap M_2$, and a negative cost to all other edges in the graph. Then $M_1$ and $M_2$ are the only two extreme point solutions in $\mathcal{S}$ for this cost function.

Now suppose that the symmetric difference has at least two connected components. Let $Z$ be one of those connected components and let $M_3=M_1\oplus Z$ and $M_4 = M_2\oplus Z$. $M_3$ and $M_4$ are also solutions in $\mathcal{S}$ and
\begin{equation*}
    \frac{1}{2}(x_1+x_2) = \frac{1}{2}(x_3+x_4).
\end{equation*}
So if there is any objective function for which $M_1$ and $M_2$ are optimal solutions, then the mid point also has the same cost, and hence $M_3$ or $M_4$ also has the same cost. Therefore, $M_1$ and $M_2$ are not adjacent extreme points in $\mathcal{S}$.
\end{proof}


A procedure to find two adjacent extreme point solutions in the solution polytope for a maximum weight matching problem is given in~\cite{berger2011budgeted}. We use a similar procedure for our problem to find two adjacent solutions in $\mathcal{S}$ in Algorithm ~\ref{alg:alg1}.
We have the following result regarding Algorithm~\ref{alg:alg1}.

\begin{lemma}
\label{lem:neighbors}
Algorithm~\ref{alg:alg1} returns $\lambda$ and two solutions $M_1$, $M_2$, such that they correspond to adjacent extreme points in $\mathcal{S}$ and 
\begin{itemize}
    \item $w_\lambda(M_1) = w_\lambda(M_2)$,
    \item $a(M_1)\leq B \leq a(M_2)$,
    \item $c(M_1)\geq c(M_2)$.
    \item $c(M_1)\leq \texttt{opt} + \lambda B$
\end{itemize}
\end{lemma}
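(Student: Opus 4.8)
The plan is to track how the loop in Algorithm~\ref{alg:alg1} transforms the initial pair into the returned pair, and to verify each of the four bullets at the end. I would begin from the input guaranteed by Line~\ref{algln:lambda}: an optimal Lagrangian multiplier $\lambda \ge 0$ together with two solutions that both \emph{minimize} $w_\lambda$ over $\mathcal{S}$ and satisfy $a(M_1) \le B \le a(M_2)$. The central structural fact I would establish first is that every connected component of the symmetric difference $M_1 \oplus M_2$ is ``$w_\lambda$-neutral.'' Writing $M_1 \oplus M_2 = Y_1 \cup \cdots \cup Y_m$ and $\Delta_i = w_\lambda(M_1 \oplus Y_i) - w_\lambda(M_1)$, optimality of $M_1$ forces $\Delta_i \ge 0$ for each $i$: each $M_1 \oplus Y_i$ is again a feasible point of $\mathcal{S}$ by the standard alternating-path argument applied in the merged graph $G'$, where the $V_c$-side degrees stay exactly one, so XOR-ing a single component preserves Constraints~\eqref{eq:c22} and~\eqref{eq:c33}. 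Since the $Y_i$ are edge-disjoint these changes are additive and sum to $w_\lambda(M_2) - w_\lambda(M_1) = 0$; hence $\Delta_i = 0$ for every $i$.

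With neutrality in hand, $a(M_1 \oplus S) = a(M_1) + \sum_{i \in S} \delta_i$ with $\delta_i = a(M_1 \oplus Y_i) - a(M_1)$ for any subset $S$ of components, while $w_\lambda(M_1 \oplus S) = w_\lambda(M_1)$ is unchanged. This is exactly the structure the loop exploits: it absorbs components into $M_1$ one at a time as long as feasibility $a(\cdot) \le B$ is preserved. Because absorbing \emph{all} components reconstructs the infeasible $M_2$ with $a(M_2) > B$, there must be a first component $Y^*$ whose absorption would violate feasibility; at that moment the returned $M_1$ is the feasible accumulator and the returned $M_2 = M_1 \oplus Y^*$. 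I would phrase the loop invariant as: at the start of each iteration $w_\lambda(M_1) = w_\lambda(M_2)$, $a(M_1) \le B < a(M_2)$, and the current $M_1 \oplus M_2$ equals the set of not-yet-processed components, and verify it is preserved in both branches.

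Given the returned pair, the four conclusions follow quickly. Since both are of the form $M_1^{\text{init}} \oplus (\text{subset})$, neutrality gives $w_\lambda(M_1) = w_\lambda(M_2)$; feasibility of the accumulator and the infeasibility created by $Y^*$ give $a(M_1) \le B \le a(M_2)$. From $w_\lambda(M_1) = w_\lambda(M_2)$ we get $c(M_1) - c(M_2) = \lambda\bigl(a(M_2) - a(M_1)\bigr) \ge 0$ because $\lambda \ge 0$, which is the third bullet. For the fourth, I combine~\eqref{eq:opt}, namely $w_\lambda(M_1) \le \texttt{opt} + \lambda B$, with $c(M_1) = w_\lambda(M_1) - \lambda a(M_1) \le w_\lambda(M_1)$, where $a(M_1) \ge 0$ since $a_{ij} = \log(1/p_{ij}) \ge 0$. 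Finally, since the returned $M_1$ and $M_2$ differ by the single component $Y^*$, Lemma~\ref{lem:adjacent} makes them adjacent, and because they are $0/1$ solutions of the integral polytope $\mathcal{S}$ they are genuine extreme points.

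The step I expect to be the main obstacle is the rigorous bookkeeping of the loop invariant, in particular confirming that the ``else'' branch really does leave the returned $M_1, M_2$ differing by exactly one connected component and that no later iteration disturbs this. This is where $w_\lambda$-neutrality and the edge-disjoint additivity of the $\delta_i$ do the real work, letting me treat absorption of components as an order-independent, purely additive process on the $a$-values while the $w_\lambda$-value and membership in $\mathcal{S}$ are automatically preserved. A secondary subtlety worth stating explicitly is that neutrality requires both input solutions to be optimal for the \emph{same} $\lambda$; this is exactly what the exact Lagrangian procedure of~\cite{berger2011budgeted} (or~\cite[Theorem 24.3]{schrijver1998theory}) guarantees, whereas the binary search of Algorithm~\ref{alg:alg0} only approaches it.
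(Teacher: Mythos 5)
Your proposal is correct and follows essentially the same route as the paper: decompose $M_1\oplus M_2$ into connected components, absorb them into $M_1$ one at a time while $a(\cdot)\leq B$ holds, stop at the first component that cannot be absorbed, and then read off the four bullets from $w_\lambda(M_1)=w_\lambda(M_2)$ together with~\eqref{eq:opt}. Your explicit ``$w_\lambda$-neutrality'' argument (each component's contribution is nonnegative by optimality of $M_1$ and the contributions sum to zero, hence each is zero) is precisely the justification the paper compresses into the single assertion that $w_\lambda(M_1)$ and $w_\lambda(M_2)$ do not change during the process, and your closing caveats about needing an exact $\lambda$ and about the else-branch terminating the loop are accurate readings of the intended procedure.
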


\begin{proof}
If $M'=M_1\oplus M_2$ consists of more than one connected component, we can pick one of those connected components, say $Y$, and let $N = M_1\oplus Y$. If $a(N)\leq B$, replace $M_1$ by $N$. Otherwise replace $M_2$  by $N$. We can repeat this step until $M1\oplus M2$ consists of only pone connected component. Note that at each step, $|M_1\cap M_2|$ increases by at least one, so this procedure stops in at most $N_a$ steps. Moreover, $M_1$ always remains feasible, and by the optimality of $M_1$ and $M_2$ with respect to $w_\lambda$, $w_\lambda(M_1)$ and $w_\lambda(M_2)$ do not change during this process. The last inequality follows from the definition of $w_\lambda(M)$ and~\eqref{eq:opt}.
\end{proof}

The following result shows that given $M_1$, $M_2$ and $\lambda$ from Algorithm~\ref{alg:alg1}, we can get a solution $M$ that may violate at most one constraint and the cost of $M$ is within $c_{\texttt{max}}$ of the optimal, where $c_{\texttt{max}}$ is the largest edge cost in $G$. Lines~\ref{algln:symm_diff} to~\ref{algln:return} of Algorithm~\ref{alg:alg2} find such a solution.
\begin{lemma}
\label{lem:d+1}
Let $M^*$ be the optimal solution to Problem~\ref{pbm:original} with cost $\texttt{opt}$. There is a polynomial time algorithm that finds a scheduling $M$ (corresponding to a solution $x$) such that
\begin{itemize}
    \item $c(M)\leq \texttt{opt} + c_{\texttt{max}}$, and
    \item one of the UGVs may have to recharge at most $d+1$ UAVs at a time.
\end{itemize}
\end{lemma}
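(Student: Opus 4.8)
The plan is to analyze the procedure in Lines~\ref{algln:symm_diff}--\ref{algln:return} of Algorithm~\ref{alg:alg2}, taking as input the triple $(\lambda,M_1,M_2)$ supplied by Algorithm~\ref{alg:alg1} (Lemma~\ref{lem:neighbors}): two adjacent extreme points of $\mathcal{S}$ with $w_\lambda(M_1)=w_\lambda(M_2)$, $a(M_1)\le B\le a(M_2)$, and $c(M_1)\le \texttt{opt}+\lambda B$. By Lemma~\ref{lem:adjacent}, $Z=M_1\oplus M_2$ is a single connected component of $G'$, i.e.\ one path or cycle whose edges alternate between $M_1$ and $M_2$. I would first record the basic exchange identity: for any contiguous sub-sequence $Z'$ of $Z$, forming $M_1\oplus Z'$ deletes the $M_1$-edges of $Z'$ and inserts its $M_2$-edges, so $w_\lambda(M_1\oplus Z')=w_\lambda(M_1)-\sum_{z_i\in Z'}\alpha_i$, where $\alpha_i=w_\lambda(z_i)$ for $z_i\in M_1$ and $\alpha_i=-w_\lambda(z_i)$ for $z_i\in M_2$. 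Since $w_\lambda(M_1)=w_\lambda(M_2)$, the full sum $\sum_i\alpha_i=0$.

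For the cost guarantee I would invoke the cyclic-sequence (rotation) lemma: because $\sum_i\alpha_i=0$, there is a starting index $z_i$ from which every partial sum is nonnegative, and I would use this to make every prefix exchange satisfy $w_\lambda(M_1\oplus Z')\le w_\lambda(M_1)=w_\lambda^*$. I then walk along $Z$ from $z_i$, taking the longest prefix $Z''$ with $a(M_1\oplus Z'')\le B$; the next edge $e$ must be an inserted $M_2$-edge (otherwise deleting it would keep the weight feasible, contradicting maximality), so the crossing solution $\hat M=(M_1\oplus Z'')\cup\{e\}$ is over budget, $a(\hat M)>B$. Using $\lambda\ge 0$ and the prefix bound, $c(\hat M)=w_\lambda(\hat M)-\lambda a(\hat M)\le w_\lambda^*-\lambda B\le \texttt{opt}$ by~\eqref{eq:opt}; since $c(e)\ge 0$, the feasible prefix satisfies $c(M_1\oplus Z'')\le c(\hat M)\le \texttt{opt}$. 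The residual additive $c_{\texttt{max}}$ then comes from a single boundary edge: restoring a valid assignment at the one endpoint of $Z''$ where an edge was deleted (the trimming step that drops the last edge when it is not an $M_2$-edge) either keeps or re-adds at most one edge of cost at most $c_{\texttt{max}}$, giving $c(M_L)\le \texttt{opt}+c_{\texttt{max}}$.

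For the constraint guarantee I would argue that an interior vertex of $Z''$ has exactly one $M_1$- and one $M_2$-edge of $Z''$ incident, so its degree is preserved under $M_1\oplus Z''$; violations can therefore occur only at the two endpoints of $Z''$. An endpoint whose incident $Z''$-edge is deleted may leave a vertex with one fewer edge (a UAV group $\mathcal{V}_r$ dropping to zero, repaired by the trimming step), while an endpoint whose incident edge is inserted raises a vertex's degree by one --- a UAV group to two edges (the de-duplication step deletes the extra one, which only lowers cost) or a UGV vertex $j\in V_g$ to two, i.e.\ a UGV serving $d+1$ UAVs. The alternation of $Z$ together with the ``remove last edge if not in $M_2$'' adjustment is what forces at most one inserting endpoint, localizing the degree increase to a single UGV; the rerouting step then attempts to move that edge to a free copy without increasing its weight, and otherwise retains it as the admitted $d+1$ violation of~\eqref{eq:c33}.

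I expect the main obstacle to be exactly this endpoint bookkeeping: proving that after the prefix is trimmed and the two fix-up steps are applied, (i) the $M_1$-side equality constraints~\eqref{eq:c22} are exactly restored, (ii) the knapsack~\eqref{eq:knapsack} still holds, so that re-adding the trimmed boundary edge does not push the weight back above $B$, and (iii) the parity of the alternating sub-path guarantees at most one inserting endpoint, so that only one UGV ever reaches $d+1$. Getting the rotation to simultaneously certify $w_\lambda(M_1\oplus Z')\le w_\lambda^*$ for the relevant prefixes and to place the unavoidable violation on the UGV side --- rather than creating two separate violations --- is the crux; the remaining Lagrangian accounting above is routine.
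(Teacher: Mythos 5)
Your overall route is the paper's route: Lagrangian relaxation, the adjacent extreme points $M_1,M_2$ from Lemma~\ref{lem:neighbors}, the single alternating component $Z=M_1\oplus M_2$ from Lemma~\ref{lem:adjacent}, the gasoline-lemma rotation so that every prefix exchange satisfies $w_\lambda(M_1\oplus Z')\le w_\lambda(M_1)$, and the Lagrangian accounting $c(M_1\oplus Z')\le w_\lambda(M_1)-\lambda B\le \texttt{opt}$ with a single boundary edge contributing the additive $c_{\texttt{max}}$. That part of your argument is sound and matches the paper (your ``crossing solution'' $\hat M$ is just $M_1\oplus Z'$ for the smallest over-budget prefix).

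The genuine gap is in the endpoint bookkeeping, exactly where you say the crux lies, and your proposed resolution does not work. In your construction $Z''$ has one \emph{deleting} endpoint (its extreme edge is an $M_1$-edge that gets removed) and one \emph{inserting} endpoint. If the deleting endpoint is a UAV group $\mathcal{V}_r$, then $M_1\oplus Z''$ leaves that UAV with \emph{zero} assigned edges, violating the equality constraint~\eqref{eq:c22}; this cannot be ``repaired by the trimming step,'' because trimming only removes further edges, and re-adding the deleted edge (or the null edge $(a^{\emptyset}_r,g^{\emptyset}_r)$) may push the weight back above $B$. Nothing in the rotation forces that endpoint to lie on the $V_g$ side, so this case is not ruled out. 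The paper closes this hole differently: it trims $Z'$ so that \emph{both} extreme edges of $Z''$ lie in $M_2$, i.e., both are inserted. Then no vertex ever falls below its degree in $M_1$, so~\eqref{eq:c22} can only be violated upward; and since an alternating path whose two end edges are both in $M_2$ has an odd number of edges, its two boundary vertices lie on opposite sides of the bipartition --- one is a $\mathcal{V}_r$ whose double assignment is repaired by deleting one of its two edges (the other endpoint of the deleted edge is a $V_g$ vertex, for which degree zero is feasible, so nothing cascades), and the other is a single $j\in V_g$ at degree two, which is precisely the tolerated $d+1$ violation of~\eqref{eq:c33}. That parity argument placing the two degree increases on opposite sides of the bipartition is the missing idea; without it (or an equivalent device) your version can strand a UAV with no recharging assignment.
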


\begin{proof}
From Lemma~\ref{lem:neighbors} and Lemma~\ref{lem:adjacent}  we have two solutions $M_1$ and $M_2$ such that $w_\lambda(M_1)=w_\lambda(M_2)$, and $M_1 \oplus M_2$ contains exactly one connected component $Z=(z_0,z_1,\ldots,z_{k-1})$. Consider the sequence
\begin{equation*}
    \alpha_0 = \delta_0 w_\lambda(z_0), \alpha_1 = \delta_1 w_\lambda(z_1), \ldots, \alpha_{k-1} = \delta_{k-1} w_\lambda(z_{k-1}),
\end{equation*}
where $\delta_i=1$ if $z_i\in M_1$ and $\delta_i=-1$ if $z_i\in M_2$. Since $w_\lambda(M_1)=w_\lambda(M_2)$, $\sum_i \alpha_i = 0$. Moreover there exists an edge $z_i, i\in \{0,1,\ldots,k-1\}$, such that for any cyclic subsequence $(z_i,z_{(i+1) \Mod k}, \ldots, z_{(i+h) \Mod k)}$,
     $\sum_{j=i}^{i+h}\alpha_{i\Mod k} \leq 0$\cite[Lemma 3]{berger2011budgeted}. Therefore,
     \begin{equation}
     \label{eq:gasoline}
         \sum_{e\in Z\cap M_2}w_\lambda(e)-\sum_{e\in Z\cap M_1}w_\lambda(e) = \sum_{j=i}^{i+h}\alpha_{i\Mod k} \leq 0.
     \end{equation}
     
Consider the smallest such subsequence $Z'$ such that $a(M_1\oplus Z')\geq B$. Assume that $Z'$ contains at least two edges, otherwise $c(M_1)\leq c(M_2) +  c_{\texttt{max}}$ and we are done. Note that removing one edge from the end of $Z'$ will mean that $Z'$ is the longest sequence such that $a(M_1\oplus Z')\leq B$. Since the connected component is alternating, remove at least one and at most two edges from the end of the sequence $Z'$ to get $Z''$ such that the last edge in $Z''$ belongs to $M_2$. Also note that by construction, the first edge in $Z''$ also belongs to $M_2$. Therefore the first and last edges from $Z''$ will appear in $M=M_1\oplus Z''$ and $M$ may result in more than one edge connected to some set $\mathcal{V}_i$ or some $j\in V_g$.  If one of the robots has more than one connected edges in $M$ (because the first and last edges in $Z''$ are from $M_2$), we remove one of those edges. Note that $M$ can have one more edge connected to a UGV location as compared to $M_1$, i.e., $\sum_{i} x_{ij} \leq 2$. Note that this can happen at one of the two ends of $Z''$ as it is alternating. So one of the UGVs may have to recharge at most $d+1$ UAVs at a time. Also note that $a(M_1\oplus Z'')\leq B$. Now we lower bound the value of $c(M)$. Since we remove at most one edge from $M_1\oplus Z''$ to get $M$, 

\begin{equation*}
c(M)\leq c(M_1\oplus Z'')\leq c(M_1\oplus Z') + c_{\texttt{max}},
\end{equation*}

where the second inequality is due to the fact that we remove at most two edges from $Z'$ to get $Z''$.

Now,

\begin{align*}
    c(M_1\oplus Z')&=w_{\lambda}(M_1\oplus Z')-\lambda a(M_1\oplus Z')\\
    &=w_{\lambda}(M_1\oplus Z')-\lambda B - \lambda (a(M_1\oplus Z')-B)\\
    & \leq w_{\lambda}(M_1)-\lambda B - \lambda (a(M_1\oplus Z')-B)\\
    &\leq \texttt{opt}  - \lambda (a(M_1\oplus Z')-B)\\
    &\leq \texttt{opt},
    \end{align*}
    
    where the first inequality is due to~\eqref{eq:gasoline} and the second last inequality is due to~\eqref{eq:opt}. Hence

\begin{equation*}
    c(M)\leq \texttt{opt} + c_{\texttt{max}}.
\end{equation*}
\end{proof}

\begin{corollary}
If each UAV can recharge at atleast $N_a$ different UGV recharging locations, there is a polynomial time algorithm that finds a feasible scheduling $M$ satisfying Constraints~\eqref{eq:c22} and~\eqref{eq:c33} such that $a(M)\leq B+2a_{\texttt{max}}$ and $c(M)\leq \texttt{opt}+3c_{\texttt{max}}$.
\end{corollary}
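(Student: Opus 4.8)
The plan is to start from the near-feasible schedule produced by Lemma~\ref{lem:d+1} and repair its violation of Constraint~\eqref{eq:c33} by rerouting an offending UAV to an unused recharging vertex; the hypothesis that every UAV can reach at least $N_a$ distinct recharging locations is exactly what guarantees that such an unused vertex exists. So the corollary is really a \emph{rounding} step that converts the $(d+1)$-overloaded solution of Lemma~\ref{lem:d+1} into a genuinely feasible one, paying a bounded amount of cost and weight.

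First I would invoke Lemma~\ref{lem:d+1} to obtain a schedule $M$ that already satisfies Constraint~\eqref{eq:c22} and has $c(M)\leq \texttt{opt}+c_{\texttt{max}}$, while its weight obeys $a(M)\leq B$ (this bound is inherited from $a(M_1\oplus Z'')\leq B$, since the final edge deletions in the proof of Lemma~\ref{lem:d+1} only decrease $a$). The sole obstruction to feasibility is that at most one UGV vertex $j\in V_g$ is shared by two UAVs, say $i_1$ and $i_2$, so that $\sum_i x_{ij}=2$ there while every other UGV vertex carries at most one UAV.

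Next I would manufacture a free target vertex by a counting argument, which is the one place the hypothesis is used. Because $M$ satisfies Constraint~\eqref{eq:c22}, the $N_a$ UAVs each recharge exactly once, but since $i_1$ and $i_2$ coincide at $j$, the UAVs occupy at most $N_a-1$ distinct vertices of $V_g$. By hypothesis each of $i_1,i_2$ is adjacent to at least $N_a$ distinct recharging locations, so at least one neighbor of $i_1$ is unused by any UAV. Reassigning $i_1$ from $j$ to such a free neighbor $j'$ removes the double occupancy at $j$ without creating a new one at $j'$, and it keeps Constraint~\eqref{eq:c22} intact since $i_1$ still recharges exactly once; the resulting schedule is feasible for both Constraints~\eqref{eq:c22} and~\eqref{eq:c33}. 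The main obstacle is precisely guaranteeing this free $j'$, and the counting step resolves it.

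Finally I would track the degradation. Each such reassignment replaces one edge incident to a UAV by another, raising the cost by at most $c_{\texttt{max}}$ and the weight by at most $a_{\texttt{max}}$. Charging a single repair on top of Lemma~\ref{lem:d+1} yields $c(M)\leq \texttt{opt}+2c_{\texttt{max}}$ and $a(M)\leq B+a_{\texttt{max}}$, which already implies the stated bounds. I expect the looser constants $3c_{\texttt{max}}$ and $2a_{\texttt{max}}$ to absorb the worst case in which one repairs the raw symmetric-difference solution $M_1\oplus Z''$ directly: both $M_1$ and $M_2$ saturate every UAV group, so over-occupancy can appear at as many as two UGV vertices, one near each end of the augmenting path, and the same counting argument supplies a free neighbor for each, with the two reassignments contributing the extra $c_{\texttt{max}}$ and $a_{\texttt{max}}$.
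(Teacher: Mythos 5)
Your proof is correct, and it is essentially the argument the paper intends: the paper states this corollary without proof, but the reassignment you describe is exactly the repair step hard-coded in the last lines of Algorithm~\ref{alg:alg2}, and your counting argument (at most $N_a-1$ occupied recharging vertices versus at least $N_a$ reachable ones) is the right way to use the hypothesis. Your constants $\texttt{opt}+2c_{\texttt{max}}$ and $B+a_{\texttt{max}}$ are in fact tighter than the stated $\texttt{opt}+3c_{\texttt{max}}$ and $B+2a_{\texttt{max}}$, which is fine since the stronger bound implies the weaker one. One small inaccuracy in your closing speculation: since $Z''$ starts and ends with $M_2$-edges it has an odd number of edges, so its two endpoints have different types in the bipartite graph and only \emph{one} UGV vertex can be overloaded; the slack in the stated constants is not explained by a second overloaded UGV vertex, but this does not affect the validity of your proof.
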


Now we prove the main result regarding Algorithm~\ref{alg:alg2}.
\begin{proof}[Proof of Theorem \ref{thm:main}]
Given $\epsilon\in(0,1]$, guess $p=\lceil1/\epsilon\rceil$ edges with highest cost value of $c_{ij}$ in the optimal solution $M^*$. Let these edges be $M^*_H$. Remove $M^*_H$ and all the edges with costs higher than the lowest cost in $M^*_H$ from the graph. Also for $(i,j)\in M^*_H$ where $i\in \mathcal{V}_r$, remove the vertices $\mathcal{V}_r$ from $V_a$ and $j$ from $V_g$.  Also decrease $B$ by $a(M^*_H)$. Then if the optimal solution to the resulting instance is $M^*_L$, $M^*_H\cup M_L^*$ is the optimal solution to the original problem. The maximum cost of an edge in the resulting instance will be at most $c(M_H)/p\leq \epsilon c(M_H)$. Then by Lemma~\ref{lem:d+1}, we get a solution $M_L$ for the new instance such that
\begin{align*}
        c(M^*_H)+c(M_L)&\leq c(M^*_H) + c(M_L^*)+ \epsilon c(M^*_H)  \\
        &\leq c(M^*) +\epsilon c(M^*)\leq (1+\epsilon)\texttt{opt}. 
\end{align*}
Since one of the UGVs may have to recharge at most $d+1$ UAVs at a time, and because $d\geq 1$, we get a $(1+\epsilon,2)$-bicriteria approximation. The algorithm requires $O(N_a^{{1}/{\epsilon}})$ guesses for $M^*_H$.
\end{proof}

\section{Experimental Results}

In this section, we first present a qualitative example of the persistent monitoring mission. Next, we study how system parameters (various risk tolerances) influence the recharging behaviors between the UAVs and the UGVs and the task performances of the UAVs. Then, we compare the performance of our scheduling strategy with a baseline (greedy strategy) using the mean time before the first failure and the travel distance overhead as metrics. Moreover, we  empirically evaluate the performance of the proposed heuristic algorithm. All experiments in Section~\ref{sec:exp_sim} are conducted using Python 3.8 on a PC with the i9-8950HK processor. The baseline solver is Gurobi 9.5.0. 
\label{sec:exp}

\subsection{Experimental Setup}
We consider a team consisting of two UAVs and two UGVs. The task routes $\mathcal{T}_a$ and $\mathcal{T}_g$ used in the problem can be either generated jointly by some task planners similar to those in \cite{manyam2019cooperative, yu2018algorithms} or can be generated separately by different task planners. In our case study, the task of two UGVs is to persistently monitor the road nodes. The setup here is similar to our previous work \cite{shi2022risk} on Intelligence, Surveillance, and Reconnaissance (ISR) where the focus is on improving high-level solutions.

 The UAV and UGV move at 9.8 m/s and 4.5 m/s respectively based on the field test data \cite{shi2022risk} in our ongoing project.  The recharging process (swapping battery) takes 100 s. The UAV and UGV need to persistently monitor the task nodes on the route. We apply our recharging strategy in a receding horizon fashion: every two minutes, the UAVs-UGVs team solves the RRRP problem to decide the UAVs' recharging schedule for the next $T=2500$ seconds. For each UAV, the current position will be the first node when we construct the bipartite graph. If some UAV is on a detour, we do not replan until the UAV has finished its detour. 

We consider two sources of stochasticity in the energy consumption model of UAVs: weight and wind velocity contribution to longitudinal steady airspeed.
The deterministic energy consumption model of the UAV is a polynomial fit constructed from analytical aircraft modeling data, given as,
\begin{equation}
    P(\bm{v_\infty}) = b_0 + b_1 \bm{v_\infty} + b_2 \bm{v_\infty}^2 + b_3 \bm{v_\infty}^3 + b_4 \bm{w} + b_5\bm{v_\infty} \bm{w}, 
\end{equation}
where $b_0$ to $b_5$ are coefficients, and their experimental values are listed in Table \ref{table:coefficient}.
\begin{table}[ht]
\centering
\caption{Coefficients for stochastic energy consumption model}
\begin{tabular}[t]{lcccccc}
\toprule
&$b_0$ &$b_1$ &$b_2$ &$b_3$ &$b_4$ &$b_5$\\
\midrule
{Value} &-88.77 &3.53 &-0.42 &0.043  &107.5  &-2.74\\
\bottomrule
\end{tabular}
\label{table:coefficient}
\end{table}

Weight is randomly selected following a normal distribution with a mean of 2.3 kg and a standard deviation of 0.05 kg, $\bm{w} \sim \mathcal{N}(\mu_{\bm{w}}, \sigma^2_{\bm{w}})$.
Vehicle airspeed, $v_\infty$, is the sum of the vehicle ground speed, $v$, and the component of the wind velocity that is parallel to the vehicle ground speed, ignoring sideslip angle and lateral wind components.
\begin{equation}
    v_\infty = \lvert \Bar{v_g} + \rm{cos}(-\psi)\bm{\xi}_{a, b} \rvert 
\end{equation}
The longitudinal wind speed contribution is derived from two random parameters; wind speed, and wind direction.
Wind speed is modeled using the Weibull probability distribution model of wind speed distribution, $\bm{\xi}_{a, b}$, with a characteristic velocity $a=1.5$ m/s and a shape parameter $b=3$. This is representative of a fairly mild steady wind near ground level.
Wind direction $\psi$ is the heading direction of the wind, and is uniformly randomly selected on a range of $[0,360)$ degrees. 

\subsection{Simulation Results} \label{sec:exp_sim}
\begin{figure*}
    \centering
    \subfloat[]{
    \includegraphics[width=0.27 \textwidth]{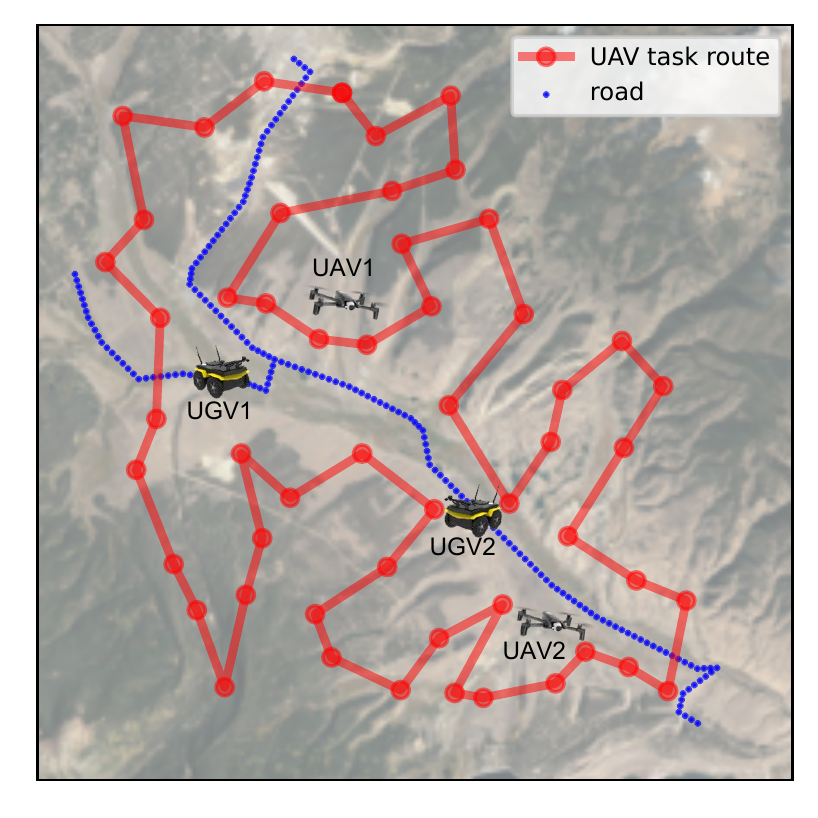}
    \label{fig:task_route}
    }
    \subfloat[]{
    \includegraphics[width=0.27 \textwidth]{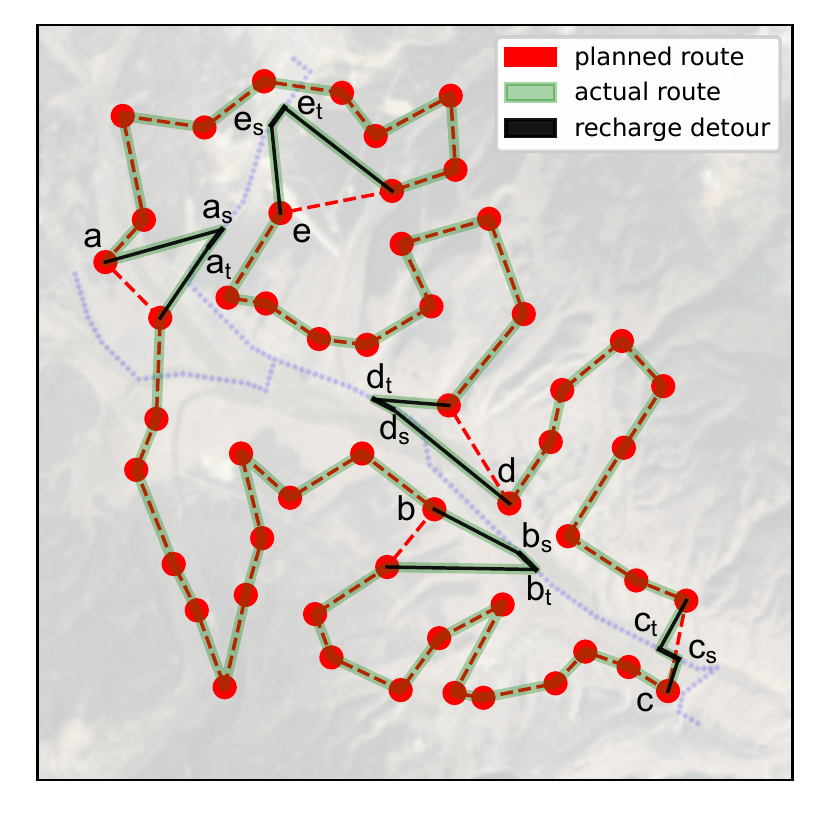}
    \label{fig:planned_actual_route_uav0_risk0.01}
    } 
    \subfloat[]{
    \centering
    \includegraphics[width=0.30 \textwidth]{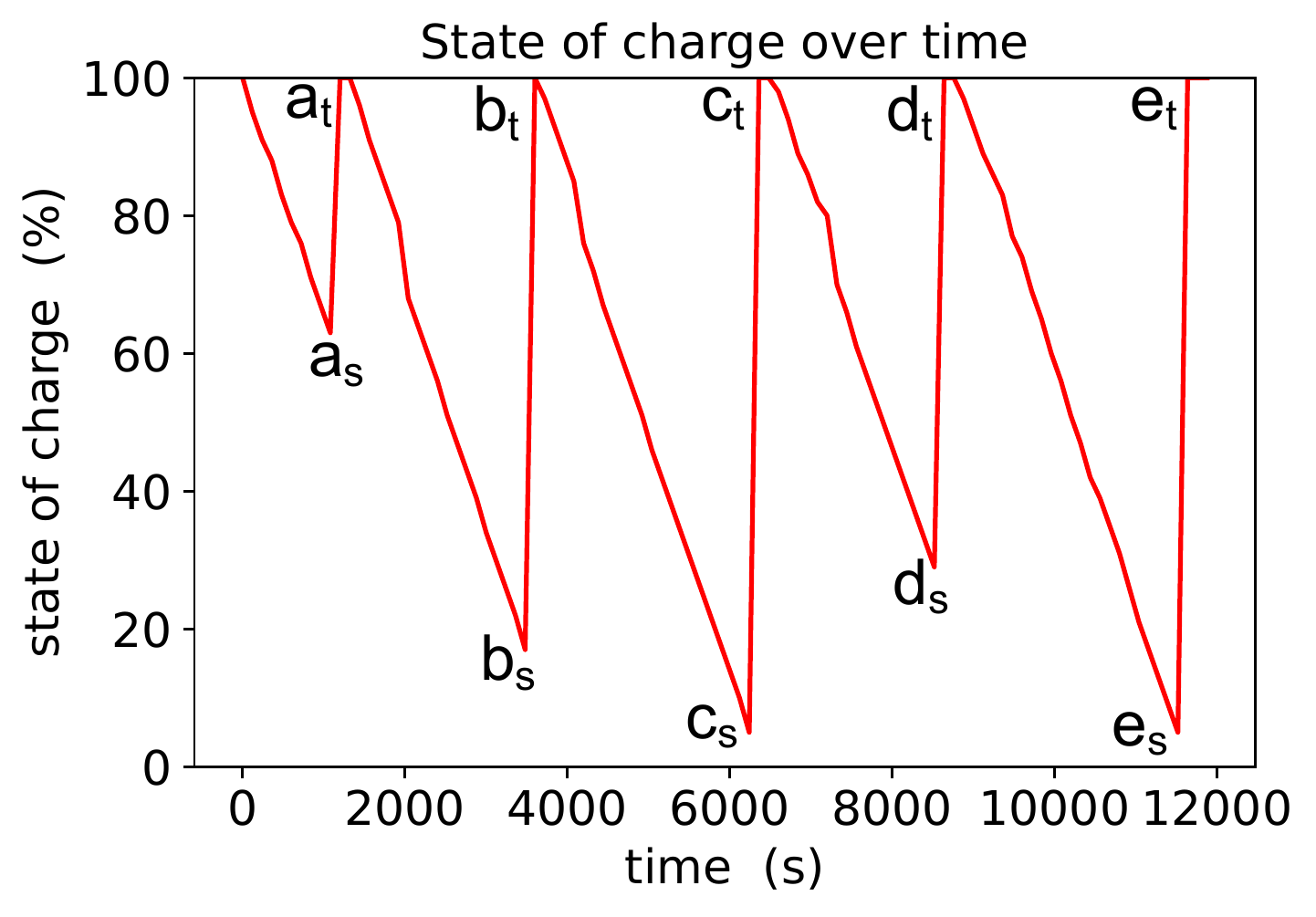}
    \label{fig:power_consumption}
    }
    \caption{
     A qualitative example to illustrate how UAV and UGV rendezvous with each other under the proposed scheduling strategy that is obtained by solving the RRRP\@. The risk tolerance is set to be $\rho=0.1$ in this case study. Subscriptions $s$ and $t$ denote the start and the terminal of the recharging process. (a) The input of the RRRP problem including the UAV and UGV tasks and the road network. (b) One sample tour of  UAV 1 when it persistently monitors the route. (c) One sample history of SOC for UAV 1. }
    \label{fig:rendezvous_illustrative_example}
\end{figure*}

\begin{figure}
    \centering
    \includegraphics[width=0.35 \textwidth]{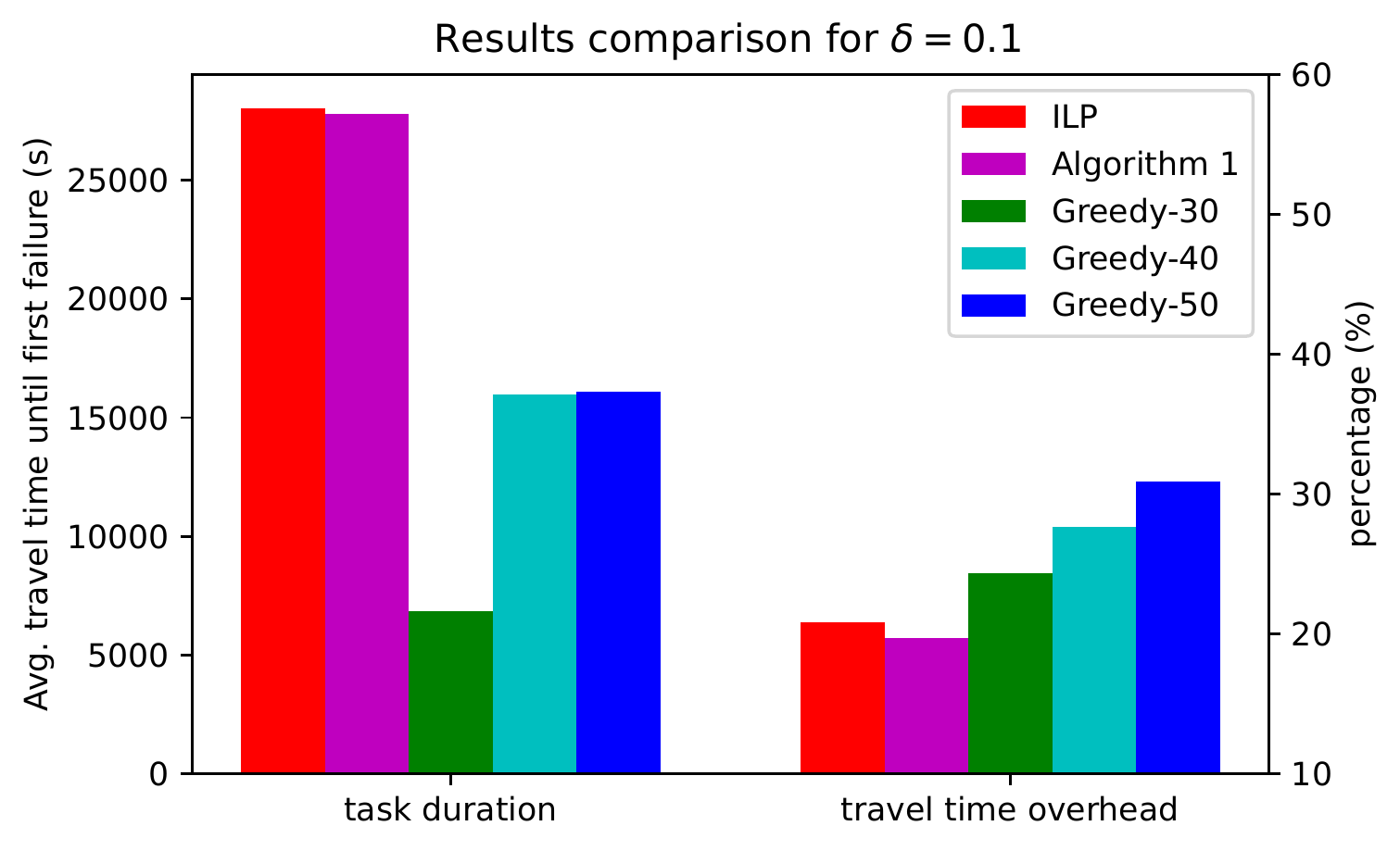}
    \label{fig:battery_history}
    \caption{
     Results comparisons for the RRRP scheduling strategy and the greedy strategies with $\rho=0.1$. ILP and Algorithm \ref{alg:alg0} refer to the solution returned by the ILP solver and Algorithm \ref{alg:alg0} by solving RRRP. }
    \label{fig:comparison_baseline}
\end{figure}
\begin{table}[t]
    \centering
    \caption{Statistical results for UAVs}
    \begin{tabular}[t]{lccc}
            \toprule
            UAV data &$\rho=0.01$ &$\rho=0.1$ &$\rho=0.3$ \\
            \midrule
             \shortstack[l]{Avg. travel time \\
            before out of charge (s)} &\centering 39660 &27600 &24360 \\
            \midrule
            Avg. travel time overhead &19.7 \% &18.5 \% &17.8 \% \\
            \midrule
            Avg. \# of task nodes visited &158 & 110  &105  \\
            \midrule
             \shortstack[l]{Avg. \#  of rendezvous \\
             per $T=2500$ s} &1.4 &1.3 &1.3 \\
            \bottomrule
    \end{tabular}
    \label{table:different_risk_threshold}
\end{table}

\paragraph{Qualitative Example} An illustrative example of the input and the output of the problem considered is shown in Figure \ref{fig:rendezvous_illustrative_example}. The input of the problem is shown in Figure \ref{fig:task_route}, which consists of UAV task nodes (red dots),  and nodes of the road network (blue nodes). 
Figure \ref{fig:planned_actual_route_uav0_risk0.01}  shows one tour route of one UAV when the system executes the proposed strategy in a receding horizon fashion. The UAV monitors the task route persistently. When the UAV reaches node $a$, it doesn't move forward to its next task node (connected through a dashed red line). Instead, the new schedule is to rendezvous with UGV at $a_s$ and takes off from the UGV at $a_t$, and then go to its next task node. Similarly, the UAV will rendezvous with the UGV when it is close to nodes $b,c,d$ and $e$. Subscriptions $s$ and $t$ denote the start and the terminal of the recharging process. A sample of the history of the state of charge (SOC) is shown in Figure \ref{fig:power_consumption}. We can observe in Figure \ref{fig:power_consumption} that the UAV's recharging strategy is more than a simple rule (for example get recharged when the SOC is below 50 $\%$) and may get recharged at various values of SOC.

\paragraph{Effect of Risk Tolerance} Next, we show how different risk tolerances influence the recharging behaviors under our RRRP formulation. In these experiments, we set the risk tolerance $\rho$ to be 0.01, 0.1, and 0.3.  Some statistical data are summarized in table \ref{table:different_risk_threshold}. We use four metrics to quantify the performance of the strategy:
\begin{itemize}
\item{Mean time before the first failure} here failure refers to the case when one UAV in the team is out of charge and needs human intervention. This quantity reflects how frequent the system needs human involved and we expect this quantity to be large enough. 

\item{Travel time overhead} this quantity is computed as 
    \begin{equation*}
        \frac{\text{actual~travel~time}-\text{task~time}}{\text{task~time}},
    \end{equation*}
    where task time is the time of the route obtained when we project the actual flight route into the planned route. This quantity reflects how well the UAV is performing its task and we expect this quantity to reasonably large. 
    
\item{Avg. \# of task nodes visited} the average number of task nodes visited by the UAV before its first failure. 

\item{Average number of rendezvous per planning horizon $T$} if this number is too large, it suggests that the UAV is scheduled too many recharging detours, which should be avoided.
\end{itemize}

In general, we observe that when the risk tolerance is set to be smaller, the mean time before the first failure will be longer. Similarly, the travel time overhead and the average number of rendezvous per planning horizon will be greater, which implies the UAV spends more portion of flight time in the recharging detours.

\paragraph{Comparison of Algorithm~\ref{alg:alg0} with Baseline} To validate that the scheduling strategy  constructed  from  RRRP, we compare our strategy with a greedy baseline. The greedy policy is set as: choose to rendezvous when state-of-charge drops below a set value.  We consider three set values $30\%$,  $40\%$, and $50\%$, and the corresponding strategies are denoted as \textit{Greedy-30}, \textit{Greedy-40}, and \textit{Greedy-50}. In this experiment, we set $\rho=0.1$. The first observation is that Algorithm \ref{alg:alg0} achieves close performances in both metrics compared to that of ILP. Second, as shown in Figure \ref{fig:comparison_baseline}, our strategy (obtained by both ILP and Algorithm \ref{alg:alg0}) can achieve longer travel time before first failure on average (left group) and a relatively lower travel distance overhead (right group), which implies that our strategy will not lead too many unnecessary rendezvous. Moreover Algorithm~\ref{alg:alg0} has a better travel time overhead than that of ILP although ILP solves RRRP optimally for a given horizon. This is likely due to solving RRRP repeatedly in a receding horizon manner. 

\paragraph{Scalability of Algorithm~\ref{alg:alg1}} We also compare the performance of Algorithm~\ref{alg:alg1} with an ILP solver empirically. We used Algorithm~\ref{alg:alg1} for comparison instead of Algorithm~\ref{alg:alg2} as Algorithm~\ref{alg:alg1} and the ILP always return a feasible solution, making the objective value comparison fair. The ILP solver used for this set of experiments is \texttt{intlinprog} function from MATLAB, and Algorithm~\ref{alg:alg1} was also implemented in MATLAB for fair comparison. Since ILP solves Problem~\ref{pbm:original} optimally, the cost returned by Algorithm~\ref{alg:alg1} is at least that of ILP solver. The percentage difference in the objective function values for different problem sizes is shown in Figure~\ref{fig:obj_comparison}. For each problem size, represented by the number of edges or variables, twenty random problem instances were created and the boxplot of resulting objective value difference is shown in the plot. On average, among all the instances, Algorithm~\ref{alg:alg1} was within $15\%$ of the optimal solution. Note that the performance of Algorithm~\ref{alg:alg1} improves as the number of variables increases. 

Figure~\ref{fig:runtime_comparison} shows the average runtime comparison between Algorithm~\ref{alg:alg1} and the ILP solver. Note that the y-scale is logarithmic. For smaller problem instances, both the algorithms solved the problem within a second, with ILP being faster, however, as the number of variables increases, ILP becomes much slower, with the runtime for ILP being up to seven times more than that of Algorithm~\ref{alg:alg1} for $60500$ variables. Note that there may be other solvers for ILP that have better run time, but since ILP is NP-complete, the exponential gap between run times is likely to continue as the number of variables increases.

\begin{figure}
    \centering
    \includegraphics[width=0.9 \linewidth]{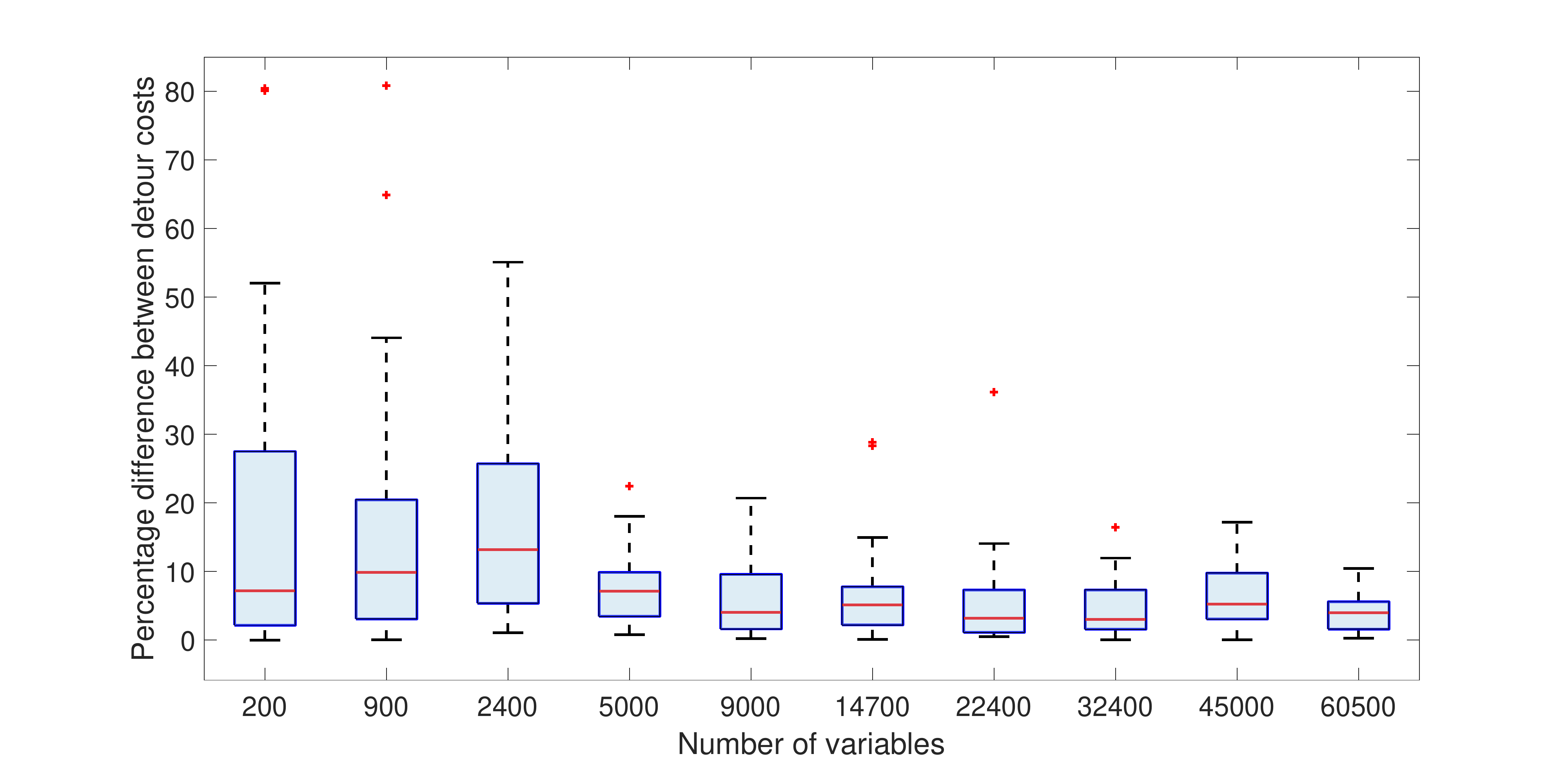}
    \caption{Percentage increase in the objective function of Algorithm~\ref{alg:alg1} as compared to an ILP solver. The boxplot shows the result of $20$ experiments for each problem size.}
     \label{fig:obj_comparison}
\end{figure}

\begin{figure}
    \centering
    \includegraphics[width=0.9 \linewidth]{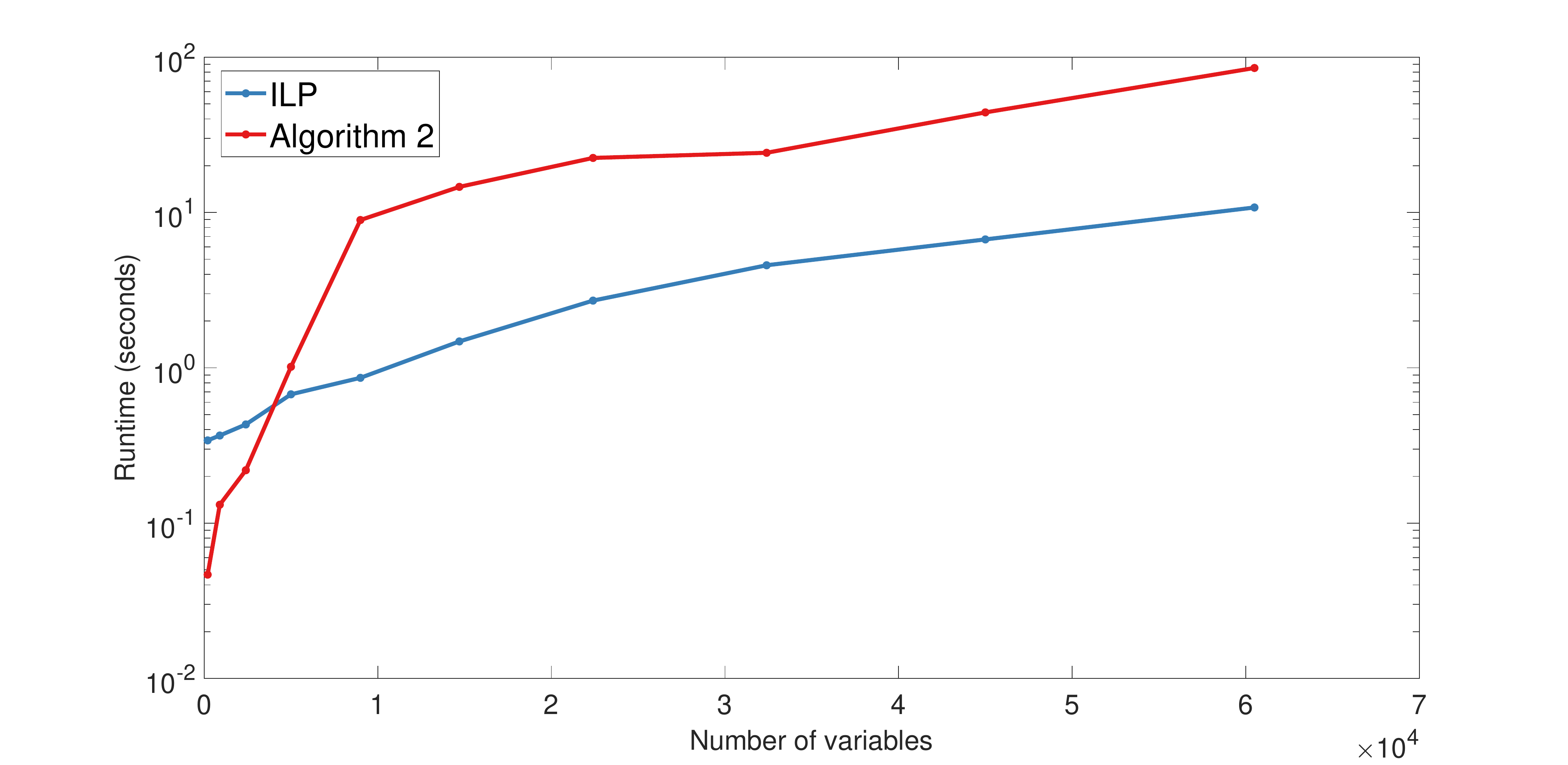}
    \caption{Comparison of runtimes of Algorithm~\ref{alg:alg1} and ILP solver. Note that the y-axis is logarithmic.} 
    \label{fig:runtime_comparison}
\end{figure}

\section{Conclusion}
In this paper, we study a resource allocation problem with matching and knapsack constraints for the UAVs-UGVs recharging rendezvous problem. We formulate this problem (Risk-aware Recharging
Rendezvous Problem (RRRP)) as one Integer Linear Program
(ILP) and propose  one bicriteria
approximation algorithm to solve RRRP. We validate our formulation and the proposed algorithm in one persistent monitoring application. In our current formulation, we assume that the task routes of vehicles are given. In future work, one direction that we will explore is to propose efficient and scalable routing algorithms to generate task routes for vehicles.

\bibliographystyle{IEEEtran}
\bibliography{ICRA2023}

\begin{appendix}
\begin{lemma}
\label{lem:UAV_vertices}
Given metric travel costs for the UAVs, there exists an optimal solution to Problem~\ref{pbm:original} where the UAVs will only leave their tours for recharging from either their current position or a task node.  
\end{lemma}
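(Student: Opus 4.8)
The plan is to argue by a local exchange. I would start from any optimal solution and show that any UAV whose recharging detour begins at an interior point of a tour edge can instead be made to begin at the preceding task node (or, if no task node precedes it within the horizon, at the UAV's current position) without increasing the objective or violating any constraint. Since each UAV recharges at most once per horizon, there is at most one detour per UAV, so it suffices to treat each UAV independently and repeat the argument.

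Fix a UAV $i$ whose detour, in the optimal solution, starts at a point $p$ strictly inside the tour segment between the last task node $u$ it has already reached and the next task node $w$; let $g\in V_g$ be the rendezvous vertex it is matched to. The key observation is that recharging completes at the instant the UGV reaches the take-off location (node $g_1^4$ in Figure~\ref{fig:rendezvous_recharging_process}), and this instant---together with the entire portion of the route executed afterward---is determined solely by the fixed UGV schedule and is independent of where UAV $i$ left its tour. Consequently the completion-time cost $c_{ig}$, which already accounts for waiting time at the rendezvous, is unchanged if we instead let UAV $i$ depart from $u$ and fly directly $u\to g$, provided this alternative remains feasible.

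Feasibility and the success probability then follow from the triangle inequality on the metric UAV costs. Departing from $u$, the UAV reaches $g$ after travelling $d(u,g)\le d(u,p)+d(p,g)$, i.e. no later than before, so it still arrives at the rendezvous no later than the UGV and the timing constraint holds. The same inequality shows the UAV expends no more energy before the rendezvous; since the discharge process is monotonic in travelled time and the post-recharge segment (from $g_1^4$ onward, with a full battery) is identical, the edge success probability $p_{ig}$ does not decrease, hence $a_{ig}=\log(1/p_{ig})$ does not increase and the knapsack constraint~\eqref{eq:knapsack} is preserved. The matching constraints~\eqref{eq:c22} and~\eqref{eq:c33} are untouched because the rendezvous vertex $g$ is unchanged. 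Thus relocating the departure to $u$ yields a feasible solution of the same cost.

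Applying this exchange to every UAV with an interior departure point produces an optimal solution in which each UAV leaves its tour only at a task node, except when $p$ lies before the first task node reached in the horizon, in which case the exchange pushes the departure back only as far as the current position; this is exactly the claimed structure. The step requiring the most care---and the crux of the argument---is the cost-invariance claim: it relies on the rendezvous being UGV-driven, so that any extra slack created by an earlier departure is absorbed as waiting time and does not shift the recharge-completion instant. Once this is established, the triangle inequality settles feasibility and the probability bound simultaneously.
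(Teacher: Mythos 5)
Your proof is correct and follows essentially the same route as the paper's: a local exchange that relocates the departure point to the preceding task node (or current position), with the triangle inequality giving feasibility and, via monotonicity of the stochastic discharge in travelled time, a non-decreasing success probability. The only cosmetic difference is in the cost step --- the paper bounds $d(v_1,g)+d(g,v_2)\le d(v_1,p)+d(p,g)+d(g,v_2)$ directly by the triangle inequality, whereas you observe that the recharge-completion instant is UGV-driven so the cost is exactly unchanged --- but both yield the required ``cost at most $c^*$'' conclusion.
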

\begin{proof}
Suppose there exist an optimal solution with cost $c*$ where UAV $i$ has to leave its tour $\mathcal{T}^a_i$ from a point $p$, that is not its current location or a task node, to rendezvous with a UGV at location $g$. Let $v_1$ be the task node that precedes $p$ in $\mathcal{T}^a_i$, or the current position of UAV $i$, if $p$ is between the current position and first task node). Also let $v_2$ be the task node following $p$ in $\mathcal{T}^a_i$. Then using the triangle inequality, the total travel time for the edges $(v_1,g)$ and $(g,v_2)$ is not more than the the total travel time on edges  $(v_1,p), (p,g)$ and $(g,v_2)$. Moreover, since battery discharge depends on the time traveled by the UAV, the probability of the UAV not running out of charge for the detour taken from $v_1$ is not less than the probability of the UAV running out of charge for the detour taken from $p$. Hence, we get a solution where the UAV leaves its tour from its current position or a task node, with a cost at most $c^*$.
\end{proof}
\end{appendix}
\end{document}